\theoremstyle{plain}
\newtheorem{theorem}{Theorem}
\newtheorem{proposition}{Proposition}
\newtheorem{lemma}{Lemma}
\newtheorem{corollary}{Corollary}
\theoremstyle{definition}
\newtheorem{definition}{Definition}
\theoremstyle{remark}
\def\eqref#1{equation~\ref{#1}}
\def\1{\bm{1}}
\def\vone{{\bm{1}}}
\DeclareMathAlphabet{\mathsfit}{\encodingdefault}{\sfdefault}{m}{sl}
\SetMathAlphabet{\mathsfit}{bold}{\encodingdefault}{\sfdefault}{bx}{n}
\def\gA{{\mathcal{A}}}
\def\gE{{\mathcal{E}}}
\def\gG{{\mathcal{G}}}
\def\gL{{\mathcal{L}}}
\def\gN{{\mathcal{N}}}
\def\gO{{\mathcal{O}}}
\def\sI{{\mathbb{I}}}
\def\sR{{\mathbb{R}}}
\DeclareMathOperator*{\argmax}{arg\,max}
\DeclareMathOperator*{\argmin}{arg\,min}
\newcommand{\widgraph}[2]{\includegraphics[keepaspectratio,width=#1]{#2}}
\newcommand{\attention}{\mathrm{Att}}
\icmltitlerunning{Self-Attention Amortized Distributional Projection Optimization for Sliced Wasserstein Point-Cloud Reconstruction}
\begin{document}

\twocolumn[
\icmltitle{Self-Attention Amortized Distributional Projection Optimization for Sliced Wasserstein Point-Cloud Reconstruction}



\icmlsetsymbol{equal}{*}

\begin{icmlauthorlist}
\icmlauthor{Khai Nguyen}{equal,yyy}
\icmlauthor{Dang Nguyen}{equal,comp}
\icmlauthor{Nhat Ho}{yyy}
\end{icmlauthorlist}

\icmlaffiliation{yyy}{Department of Statistics and Data Sciences, University of Texas at Austin, USA}
\icmlaffiliation{comp}{VinAI Research}

\icmlcorrespondingauthor{Khai Nguyen}{khainb@utexas.edu}

\icmlkeywords{Machine Learning, ICML}

\vskip 0.3in
]



\printAffiliationsAndNotice{\icmlEqualContribution} 

\begin{abstract}
Max sliced Wasserstein (Max-SW) distance has been widely known as a solution for less discriminative projections of sliced Wasserstein (SW) distance. In applications that have various independent pairs of probability measures, amortized projection optimization is utilized to predict the ``max" projecting directions given two input measures instead of using projected gradient ascent multiple times. Despite being efficient, Max-SW and its amortized version cannot guarantee metricity property due to the sub-optimality of the projected gradient ascent and the amortization gap. Therefore, we propose to replace Max-SW with distributional sliced Wasserstein distance with von Mises-Fisher (vMF) projecting distribution (v-DSW). Since v-DSW is a metric with any non-degenerate vMF distribution, its amortized version can guarantee the metricity when performing amortization. Furthermore, current amortized models  are not permutation invariant  and symmetric. To address the issue, we design amortized models based on self-attention architecture. In particular, we adopt efficient self-attention architectures to make the computation linear in the number of supports. With the two improvements, we derive \textit{self-attention amortized distributional projection optimization} and show its appealing performance in point-cloud reconstruction and its downstream applications.
\end{abstract}

\section{Introduction}
\label{sec:introduction}
Wasserstein distance~\cite{villani2008optimal,peyre2019computational} has been widely recognized in the community of machine learning as an effective tool. For example, Wasserstein distance is used to explore clusters inside data~\cite{ho2017multilevel}, to transfer knowledge between different domains~\cite{courty2016optimal, damodaran2018deepjdot}, to learn generative models~\cite{arjovsky2017wasserstein, tolstikhin2018wasserstein}, to extract features from graphs~\cite{vincent2022template}, to compare datasets~\cite{alvarez2020geometric}, and many other applications. Despite being effective, Wasserstein distance is extremely expensive to compute. In particular, the computational complexity and memory complexity of Wasserstein distance in the discrete case is $\mathcal{O}(m^3 \log m)$ and $\mathcal{O}(m^2)$ respectively with $m$ is the number of supports. The computational problem becomes more severe for applications that require computing the Wasserstein distance \textit{multiple times} on different pairs of measures. Some examples can be named: deep generative modeling~\cite{genevay2018learning}, deep domain adaptation~\cite{bhushan2018deepjdot}, comparing datasets~\cite{alvarez2020geometric}, topic modeling~\cite{huynh2020otlda}, point-cloud reconstruction~\cite{achlioptas2018learning}, and so on.

By adding entropic regularization~\cite{cuturi2013sinkhorn}, an $\varepsilon$-approximation of Wasserstein distance can be obtained in $\mathcal{O}(m^2/\varepsilon^2)$. However, this approach cannot reduce the memory complexity of $\mathcal{O}(m^2)$ due to the storage of the cost matrix. A more efficient approach is based on the closed-form solution of Wasserstein distance in one dimension which is known as sliced Wasserstein distance~\cite{bonneel2015sliced}. Sliced Wasserstein (SW) distance is defined as the expectation of the Wasserstein distance between random one-dimensional push-forward measures from the two original measures. Thanks to the closed-form solution, SW can be solved in $\mathcal{O}(m \log_2 m)$ while having a linear memory complexity $\mathcal{O}(m)$. Moreover, SW is also better than Wasserstein distance in high-dimensional statistical inference. Namely, the sample complexity (statistical estimation rate) of SW is $\mathcal{O}(n^{-1/2})$ compared to $\mathcal{O}(n^{-1/d})$ of Wasserstein distance with $d$ is the number dimension and $n$ is the number of data samples. Due to appealing properties, SW is utilized successfully in various applications e.g., generative modeling~\cite{deshpande2018generative,nguyen2022revisiting,nguyen2023hierarchical}, domain adaptation~\cite{lee2019sliced}, Bayesian inference~\cite{nadjahi2020approximate,yi2021sliced}, point-cloud representation learning~\cite{Nguyen2021PointSetDistances,naderializadeh2021pooling}, and so on.

The downside of SW is that it treats all projections the same due to the usage of a uniform distribution over projecting directions. This choice is inappropriate in practice since there exist projecting directions that cannot discriminate two interested measures~\cite{kolouri2018sliced}. As a solution, max sliced Wasserstein distance (Max-SW)~\cite{deshpande2019max} is introduced by searching for the best projecting direction that can maximize the projected Wasserstein distance. Max-SW needs to use a projected sub-gradient ascent algorithm to find the ``max" slice. Therefore, in applications that need to evaluate Max-SW \textit{multiple times} on \textit{different pairs of measures}, the repeated optimization procedure is costly. For example, this paper focuses on point-cloud reconstruction applications where Max-SW needs to be computed between various pairs of empirical measures over a point-cloud and its reconstructed version.

To address the problem, amortized projection optimization is proposed in~\cite{nguyen2022amortized}. As in other amortized optimization~\cite{ruishu2017,amos2022tutorial} (learning to learn), an amortized model is estimated to predict the best projecting direction given the two input empirical measures. The authors in~\cite{nguyen2022amortized} propose three types of amortized models including linear model, generalized linear model, and non-linear model. The linear model assumes that the ``max" projecting direction is a linear combination of supports of two measures. The generalized linear model injects the linearity through a link function on the supports of two measures while the non-linear model uses multilayer perceptions to have more expressiveness.

Despite performing well in practice, the previous work has not explored the full potential of amortized optimization in the sliced Wasserstein setting. There are two issues in the current amortized optimization framework. Firstly, the sub-optimality of amortized optimization leads to losing the metricity of the projected distance from the predicted projecting direction. In particular, the metricity of Max-SW is only obtained at the global optimum. Therefore, using an amortized model with sub-optimal solutions cannot achieve the metricity for all pairs of measures. Losing metricity property could hurt the performance of downstream applications. Secondly, the current amortized models are not permutation invariant to the supports of two input measures and are not symmetric. The permutation-invariant and symmetry properties are vital since the ``max" projecting direction is also not changed when permuting supports of two input empirical measures and exchanging two input empirical measures. By inducing the permutation-invariance and symmetry to the amortized model, it could help to learn a better amortized model and reduce the amortization gap

In this paper, we focus on overcoming the two issues of the current amortized projection optimization framework. For metricity preservation, we propose \textit{amortized distributional projection optimization} framework which predicts the best distribution over projecting directions. In particular, we do amortized optimization for distributional sliced Wasserstein (DSW) distance~\cite{nguyen2021distributional} with von Mises Fisher (vMF) slicing distribution~\cite{jupp1979maximum} instead of Max-SW. Thanks to the smoothness of vMF, the metricity can be preserved even without a zero amortization gap. For the permutation-invariance and symmetry properties, we propose to use the self-attention mechanism~\cite{vaswani2017attention} to design the amortized model. Moreover, we utilize efficient self-attention approaches that have the computational complexity scales linearly in the number of supports including efficient attention~\cite{shen2021efficient} and linear attention~\cite{wang2020linformer}.

\textbf{Contribution.} In summary, our contribution is two-fold:

1. First, we introduce \textit{amortized distributional projection amortization} framework which predicts the best location parameter for von Mises-Fisher (vMF) distribution in distributional sliced Wasserstein (DSW) distance. Due to the smoothness of vMF, the metricity is guaranteed for all pairs of measures. Moreover, we enhance amortized models by inducing inductive biases which are permutation invariance and symmetry. To improve the efficiency, we leverage two linear-complexity attention mechanisms including efficient attention~\cite{shen2021efficient} and linear attention~\cite{wang2020linformer} to parameterize the amortized model. Combining the above two improvements, we obtain \textit{self-attention amortized distributional projection amortization} framework 

2. Second, we adapt the new  framework to the point-clouds reconstruction problem. In particular, we want to learn an autoencoder that can reconstruct (encode and decode) all point-clouds through their latent representations. The main idea is to treat a point-cloud as an empirical measure and use sliced Wasserstein distances as the reconstruction losses. Here, amortized optimization serves as a fast way to yield informative projecting directions for sliced Wasserstein distance to discriminative all pairs of original point-cloud and reconstructed point-cloud. Empirically, we show that the self-attention amortized distributional projection amortization provides better reconstructed point-clouds on the ModelNet40 dataset~\cite{wu20153d} than the amortized projection optimization framework and widely used distances. Moreover, on downstream tasks, the new framework also leads to higher classification accuracy on ModelNet40 and generates ShapeNet chairs with better quality.
 
\textbf{Organization.} The remainder of the paper is organized as
follows. In Section 2, we provide backgrounds for point-cloud reconstruction and popular distances. In Section 3, we define the new amortized distributional projection optimization framework for the point-cloud reconstruction problem. Section 4 benchmarks the proposed method by extensive experiments on point-cloud reconstruction, transfer learning, and point-cloud generation. Finally, proofs of key results and extra materials are in the supplementary.

\textbf{Notation.} For any $d \geq 2$, we denote $\mathcal{U}(\mathbb{S}^{d-1})$ is the uniform measure over the unit hyper-sphere $\mathbb{S}^{d-1}:=\{\theta \in \mathbb{R}^{d}\mid  ||\theta||_2^2 =1\}$.  For $p\geq 1$, $\mathcal{P}_p(\mathbb{R}^d)$ is the set of all probability measures on $\mathbb{R}^d$ that have finite $p$-moments.  For any two sequences $a_{n}$ and $b_{n}$, the notation $a_{n} = \mathcal{O}(b_{n})$ means that $a_{n} \leq C b_{n}$ for all $n \geq 1$, where $C$ is some universal constant. 
We denote $\theta \sharp \mu$ is the push-forward measures of $\mu$ through the function $f:\mathbb{R}^{d} \to \mathbb{R}$ that is $f(x) = \theta^\top x$.

\section{Preliminaries}
\label{sec:Preliminaries}
We first review the point-cloud reconstruction framework in Section~\ref{subsec:pointcloud_reconstruction}. After that, we discuss famous choices of metrics between two point-clouds in Section~\ref{subsec:metrics}. Finally, we present an adapted definition of the amortized projection optimization framework in the point-cloud reconstruction setting in Section~\ref{subsec:amortized_projection}.

\subsection{Point-Cloud Reconstruction}
\label{subsec:pointcloud_reconstruction}

 \begin{figure}[t]
\begin{center}
    
  \begin{tabular}{c}
  \widgraph{0.45\textwidth}{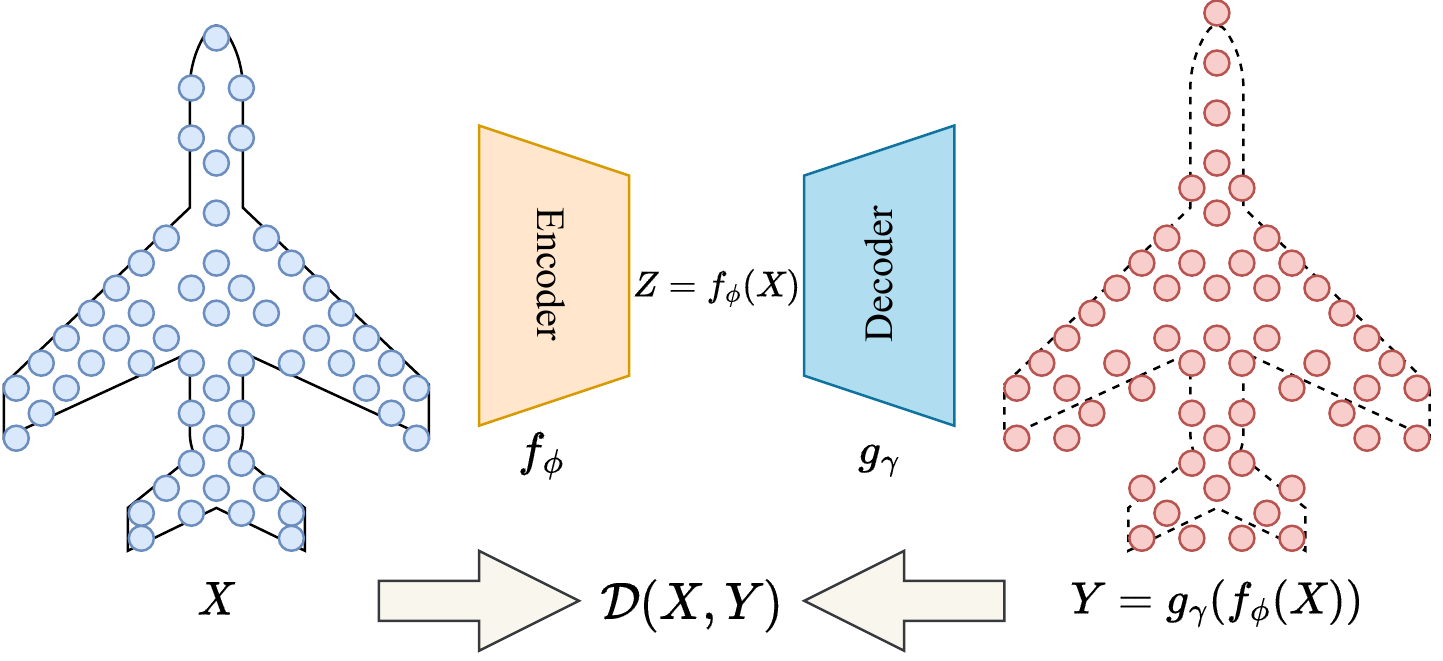}

  \end{tabular}
  \end{center}
  \vskip -0.1in
  \caption{
  \footnotesize{The reconstruction of a point-cloud $X$ (a plane).
}
} 
  \label{fig:reconstruction}
   \vskip -0.2in
\end{figure}

We denote a point-cloud of $m$ points $x_1,\ldots,x_m \in \mathbb{R}^d$ ($d \geq 1$) as $X=(x_1,\ldots,x_m) \in \mathbb{R}^{dm}$ which is a vector of a concatenation of all points in the point-cloud. We denote the set of all possible point-clouds as $\mathcal{X} \subset \mathbb{R}^{dm}$.

\textbf{Permutation invariant metric space.} Given a permutation one-to-one mapping function $\sigma: [m] \to [m]$, we have $\sigma(X) \in \mathcal{X}$ for all $X\in\mathcal{X}$. Moreover, we need a metric $\mathcal{D}:\mathcal{X}\times \mathcal{X} \to \mathbb{R}^+$ such that $\mathcal{D}(X,\sigma(X))=0$ for all $X\in\mathcal{X}$ where $\sigma(X)=(x_{\sigma(1)},\ldots,x_{\sigma(m)})$. Here, $\mathcal{D}$ is a metric, namely, it needs to satisfy the non-negativity, symmetry, triangle inequality, and identity property. The pair $(\mathcal{X},\mathcal{D})$ forms a point-cloud metric space. 

\textbf{Learning representation via reconstruction.} The raw representation of point-clouds is hard to work with in applications due to the complicated metric space. Therefore, a famous approach is to map point-clouds to points in a different space e.g., Euclidean, which is easier to apply machine learning algorithms. In more detail, we want to estimate a function $f_\phi:\mathcal{X} \to \mathcal{Z}$ ($\phi \in \Phi$) where $\mathcal{Z}$ is a set that belongs to another metric space. Then, we can apply machine learning algorithms on $\mathcal{Z}$ instead of $\mathcal{X}$.  The most well-known and effective way to estimate the function $f_\phi$ is through reconstruction loss. Namely, we estimate $f_\phi$ jointly with a function $g_\gamma:\mathcal{Z} \to \mathcal{X} $ ($\gamma \in \Gamma$) given a point-cloud dataset $p(X)$ (distribution over $\mathcal{X}$)  by minimizing the objective:
\begin{align}
    \label{eq:reconstruction}
    \min_{\phi \in \Phi,\gamma \in \Gamma }\mathbb{E}_{X \sim p(X)} \mathcal{D}(X,g_\gamma (f_\phi(X))).
\end{align}
The loss $\mathbb{E}_{X \sim p(X)} \mathcal{D}(X,g_\gamma (f_\phi(X)))$ is known as the reconstruction loss. If the reconstruction loss is 0, we have $g_\gamma = f_\phi^{-1}$ p-almost surely. Therefore, we can move from $\mathcal{X}$ to $\mathcal{Z}$ and move back from $\mathcal{Z}$ to $\mathcal{X}$ without losing information through the functions $f_\phi$ (referred as the encoder) and $g_\gamma$ (referred as the decoder). We show an illustration of the framework~\cite{achlioptas2018learning} in Figure~\ref{fig:reconstruction}. After learning how to do the reconstruction well, other point-cloud tasks can be done using the autoencoder (the pair $(f_\phi,g_\gamma)$) e.g., shape interpolation, shape editing, shape analogy, shape completion, point-cloud classification, and point-cloud generation~\cite{achlioptas2018learning}.

%
\subsection{Metric Spaces for Point-Clouds}
\label{subsec:metrics}
We now review some famous choices of the metric $\mathcal{D}$ which are Chamfer distance~\cite{barrow1977parametric}, Wasserstein distance~\cite{villani2008optimal}, sliced Wasserstein (SW) distance~\cite{bonneel2015sliced}, and max sliced Wasserstein (Max-SW)~\cite{deshpande2019max} distance.

\textbf{Chamfer distance. } For any two point-clouds $X$ and $Y$, the Chamfer distance is defined as follows: $\text{CD}(X, Y)  =$
\begin{align}
\label{eq:chamfer}
     \frac{1}{|X|} \sum \limits_{x \in X} \min \limits_{y \in Y} \| x - y\|_2^{2} 
    + \frac{1}{|Y|} \sum \limits_{y \in Y} \min \limits_{x \in X} \| x - y\|_2^{2},
\end{align}
where $|X|$ denotes the number of points in  $X$.

\textbf{Wasserstein distance.} Given two probability measures $\mu \in \mathcal{P}_p(\mathbb{R}^d)$ and $\nu \in \mathcal{P}_p(\mathbb{R}^d)$, the Wasserstein distance between $\mu$ and $\nu$ is defined as follows: 
\begin{align}
\label{eq:W}
    \text{W}_p(\mu,\nu)  = \left(\inf_{\pi \in \Pi(\mu,\nu)} \int_{\mathbb{R}^d \times \mathbb{R}^d} \| x - y\|_p^{p} d \pi(x,y) \right)^{\frac{1}{p}}
\end{align}
where $\Pi (\mu,\nu)$ is set of all couplings whose marginals are $\mu$ and $\nu$ respectively.
Since the Wasserstein distance is originally defined on probability measures space, we need to convert a point-cloud $X=(x_1,\ldots,x_m) \in \mathcal{X}$ to the corresponding empirical probability measure $P_X =\frac{1}{m}\sum_{i=1}^m \delta_{x_i} \in \mathcal{P}(\mathbb{R}^d)$. Therefore, we can use $\mathcal{D}(X,Y) = \text{W}_p(P_X,P_Y)$ for $X,Y \in \mathcal{X}$.

\textbf{Sliced Wasserstein distance.} As discussed, the Wasserstein distance is expensive to compute with the time complexity $\mathcal{O}(m^3 \log m)$ and the memory complexity $\mathcal{O}(m^2)$. Therefore, an alternative choice is sliced Wasserstein (SW) distance between two probability measures $\mu \in \mathcal{P}_p(\mathbb{R}^d)$ and $\nu\in \mathcal{P}_p(\mathbb{R}^d)$ is:
\begin{align}
\label{eq:SW}
    \text{SW}_p(\mu,\nu)  = \left(\mathbb{E}_{ \theta \sim \mathcal{U}(\mathbb{S}^{d-1})} \text{W}_p^p (\theta \sharp \mu,\theta \sharp \nu)\right)^{\frac{1}{p}
    },
\end{align}
The benefit of SW is that $\text{W}_p (\theta \sharp \mu,\theta \sharp \nu)$ has a closed-form solution which is $$\text{W}_p (\theta \sharp \mu,\theta \sharp \nu)= \left(\int_0^1 |F_{\theta\sharp\mu}^{-1}(z) - F_{\theta \sharp \nu}^{-1}(z)|^{p} dz\right)^{\frac{1}{p}},$$ with $F^{-1}$ denotes the inverse CDF function. The expectation is often approximated by Monte Carlo sampling, namely, it is replaced by the average from $\theta_1,\ldots,\theta_L$ that are drawn i.i.d from $\mathcal{U}(\mathbb{S}^{d-1})$. The computational complexity and memory complexity of SW becomes $\mathcal{O}(Lm\log_2 m)$ and $\mathcal{O}(Lm)$.

\textbf{Max sliced Wasserstein distance. } It is well-known that SW has a lot of less discriminative projections due to the uniform sampling. Therefore, max sliced Wasserstein distance is proposed to use the most discriminative projecting direction. Max sliced Wasserstein (Max-SW) distance~\cite{deshpande2019max} between $\mu \in \mathcal{P}_p(\mathbb{R}^d)$ and $\nu\in \mathcal{P}_p(\mathbb{R}^d)$ is introduced as follows:
\begin{align}
    \label{eq:MaxSW}
    \text{Max-SW}_p(\mu,\nu)=\max_{\theta \in \mathbb{S}^{d - 1}} W_p(\theta\sharp \mu,\theta \sharp \nu),
\end{align}
Max-SW is often computed by a projected sub-gradient ascent algorithm. When the projected sub-gradient ascent algorithm has $T \geq 1$ iterations, the computation complexity of Max-SW is $\mathcal{O}(T m\log_2 m )$ and the memory complexity is $\mathcal{O}(m)$. Both SW and Max-SW are applied successfully in point-cloud reconstruction~\cite{Nguyen2021PointSetDistances}.
\subsection{Amortized Projection Optimization}
\label{subsec:amortized_projection}

\textbf{Amortized Optimization.} We start with the definition of amortized optimization.
\begin{definition}
\label{def:amodel} For each context variable $x$ in the context space $\mathcal{X}$, $\theta^\star(x)$ is the solution of the optimization problem $\theta^\star(x) = \argmin_{\theta \in \Theta} \mathcal{L}(\theta,x)$, where $\Theta$ is the solution space. A parametric function $f_\psi: \mathcal{X} \to \Theta$, where $\psi \in \Psi$, is called an amortized model if 
\begin{align}
\label{eq:famortized}
    f_\psi (x) \approx \theta^\star (x), \quad \forall x \in \mathcal{X}.
\end{align}
The amortized model is trained by the amortized optimization objective which is defined as:
\begin{align}
\label{eq:amortizedobjective}
\min_{\psi \in \Psi} \mathbb{E}_{x \sim p(x)} \mathcal{L}(f_\psi(x),x),
\end{align}
where  $p(x)$ is a probability measure on $\mathcal{X}$ which measures the ``importance" of optimization problems.
\end{definition}

\textbf{Amortized Projection Optimization.} We now revisit the point-cloud reconstruction objective with $\mathcal{D}(X,Y)=\text{Max-SW}_p(P_X,P_Y)$:
\begin{align}
    \label{eq:reconstruction_max}
    \min_{\phi \in \Phi,\gamma \in \Gamma }\mathbb{E} \left[\max_{\theta \in \mathbb{S}^{d-1}}\text{W}_p(\theta \sharp P_X,\theta \sharp P_{g_\gamma (f_\phi(X))})\right],
\end{align}
where the expectation is with respect to $X \sim p(X)$. For each point-cloud $X \in \mathcal{X}$, we need to compute a Max-SW distance with an iterative optimization procedure. Therefore, it is computationally expensive.

Authors in ~\cite{nguyen2022amortized}  propose to use amortized optimization~\cite{ruishu2017,amos2022tutorial} to speed up the problem. Instead of solving all optimization problems independently, an amortized model is trained to predict optimal solutions to all problems. In greater detail, given a parametric function $a_\psi: \mathcal{X}\times \mathcal{X} \to \mathbb{S}^{d-1}$ ($\psi \in \Psi$), the amortized objective is:
\begin{align}
    \label{eq:amortized_reconstruction} 
    \min_{\phi \in \Phi,\gamma \in \Gamma}\max_{ \psi \in \Psi}\mathbb{E}\text{W}_p(\theta_{\psi,\gamma,\phi}\sharp P_X,\theta_{\psi,\gamma,\phi} \sharp P_{g_\gamma (f_\phi(X))}),
\end{align}
where the expectation is with respect to $X \sim p(X)$, and $\theta_{\psi,\gamma,\phi} = a_\psi(X,g_\gamma (f_\phi(X)))$.  The above optimization is solved by an alternative stochastic (projected)-gradient descent-ascent algorithm. Therefore, it is faster to compute in each update iteration of $\phi$ and $\gamma$. It is worth noting that the previous work~\cite{nguyen2022amortized} considers the generative model application which is unstable and hard to understand. Here,  we adapt the framework to the point-cloud reconstruction application which is easier to explore the behavior of amortized optimization. We refer the reader to  Algorithms~\ref{alg:trainingMaxSW}-\ref{alg:trainingamortizedMaxSW} in Appendix~\ref{subsec:training_algorithms} for algorithms on training an autoencoder with Max-SW and amortized projection optimization.

\textbf{Amortized models. } Authors in~\cite{nguyen2022amortized} propose three types of amortized models that are based on the literature on linear models~\cite{christensen2002plane}. In particular, the linear amortized model is defined as:
\begin{definition}\label{def:linear_model} 
Given $X,Y \in  \sR^{dm}$, the \emph{linear amortized model} is defined as:
$$
    a_\psi (X,Y) := \frac{w_0+X'w_1 + Y'w_2}{||w_0+X'w_1 + Y'w_2 ||_2},
$$
where $X'$ and $Y'$ are matrices of size $d\times m$ that are  reshaped from the concatenated vectors $X$ and $Y$ of size $dm$,  $\psi =(w_0,w_1,w_2)$ with  $w_1,w_2 \in \sR^{ m}$, and $w_0 \in \sR^d $ .
\end{definition}
Similarly, the generalized linear amortized model and the non-linear amortized model are defined by injecting non-linearity into the linear model. We review the definitions of the generalized linear amortized model and non-linear amortized model in Definitions~\ref{def:glinear_model}-\ref{def:nonlinear_model} in Appendix~\ref{subsec:additional_amortized_models}.

\textbf{Sub-optimality.} Despite being faster, amortized optimization often cannot recover the global optimum of optimization problems. Namely, we denote $$\theta^\star (X) = \text{argmax}_{\theta \in \mathbb{S}^{d-1}} \text{W}_p(\theta \sharp P_X,\theta  \sharp P_{g_\gamma (f_\phi(X))})$$ and $\psi^\star=$
\begin{align*}
    \argmax_{\psi \in \Psi} \mathbb{E}_{X \in p(X)} \left[\text{W}_p(\theta_{\psi,\gamma,\phi}\sharp P_X,\theta_{\psi,\gamma,\phi} \sharp P_{g_\gamma (f_\phi(X))})\right].
\end{align*} Then, it is well-known that the amortization gap $\mathbb{E}_{X \sim p(X)}[c(\theta^\star (X), a_{\psi^\star}(X,g_\gamma (f_\phi(X))) )] > 0$ for a metric $c: \mathbb{S}^{d-1}\times \mathbb{S}^{d-1} \to \mathbb{R}^+$. A great amortized model is one that can minimize the amortization gap. However, in the amortized projection optimization setting, we cannot obtain $\theta^\star (X)$ since the projected gradient ascent algorithm can only yield the local optimum. Therefore, a careful investigation of the amortization gap is challenging.

\section{Self-Attention Amortized Distributional Projection Optimization}
\label{sec:main_method}

In this section, we propose the self-attention amortized distributional projection optimization framework. First,  we present amortized distributional projection optimization to maintain the metricity property in Section~\ref{subsec:amortized_distributional}. We then introduce self-attention amortized models which are symmetric and permutation invariant in Section~\ref{subsec:selfattention_models}.

\subsection{Amortized Distributional Projection Optimization}
\label{subsec:amortized_distributional}

 \begin{figure}[t]
\begin{center}
    
  \begin{tabular}{c}
  \widgraph{0.45\textwidth}{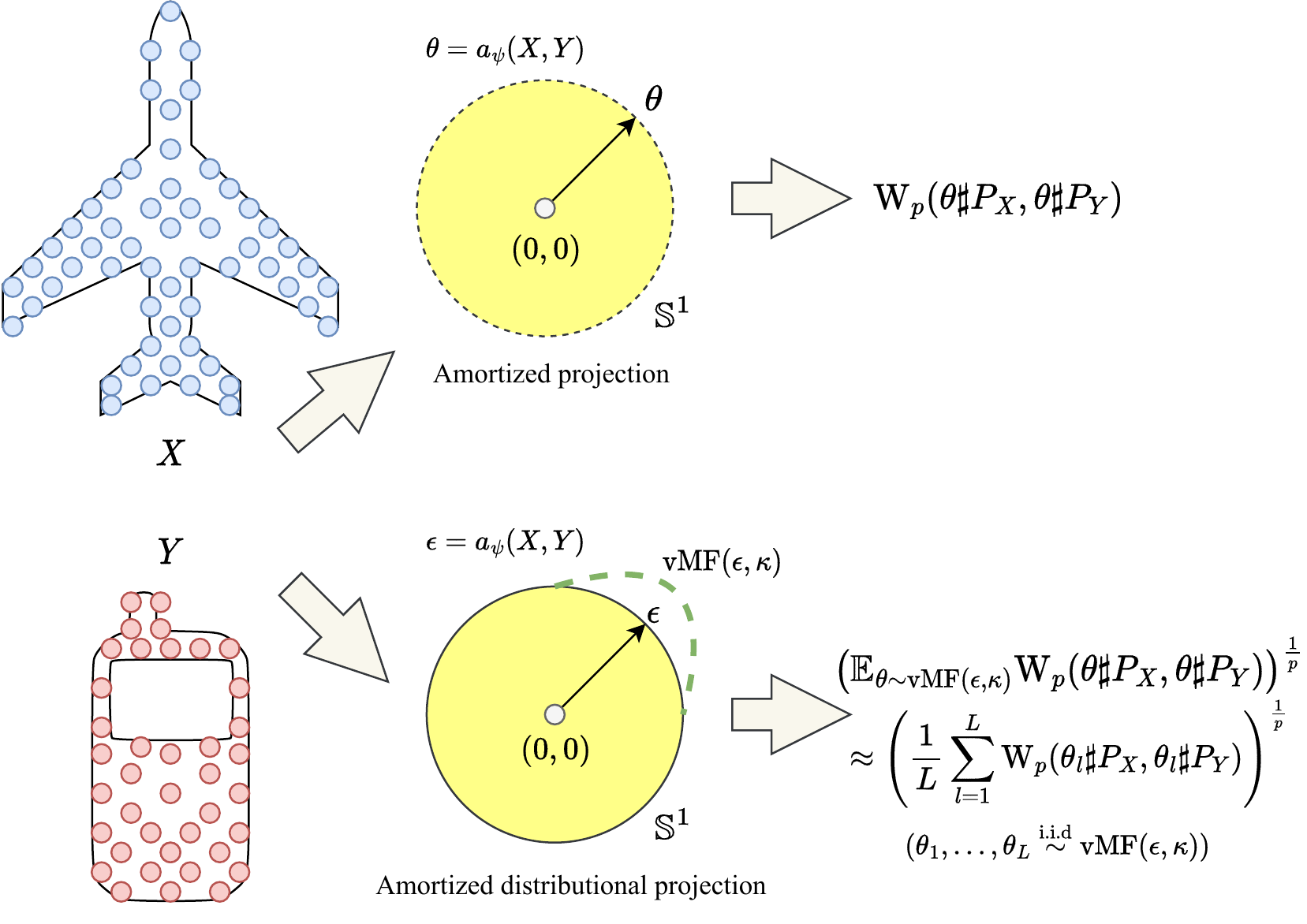}

  \end{tabular}
  \end{center}
  \vskip -0.1in
  \caption{
  \footnotesize{The difference between amortized projection optimization and amortized distributional projection optimization.
}
} 
  \label{fig:amortized}
   \vskip -0.1in
\end{figure}

The current amortized projection optimization framework is for predicting the ``max" projecting direction in Max-SW. However, the projected one-dimensional Wasserstein is only a metric on space of probability measure at the global optimum of Max-SW. Therefore, the local optimum from  the projected sub-gradient ascent algorithm~\cite{nietert2022statistical} and the prediction from the amortized model only yield pseudo-metricity for the projected Wasserstein.

\begin{proposition}
    \label{prop:pseudo_metricity} Let the projected one-dimensional Wasserstein be $\text{PW}_p(\mu,\nu;\hat{\theta}) = \text{W}_p (\hat{\theta} \sharp \mu,\hat{\theta} \sharp \nu)) $ for any  $\mu,\nu \in \mathcal{P}_p(\mathbb{R}^d)$ ($p\geq 1, d\geq 1$) and $\hat{\theta} \in \mathbb{S}^{d-1}$ such that $\hat{\theta} \neq \argmax_{\theta \in \mathbb{S}^{d-1}}\text{W}_p(\theta \sharp \mu,\theta \sharp \nu)$ ,  $\text{PW}_p(\mu,\nu;\hat{\theta})$ is a pseudo metric on $\mathcal{P}_p(\mathbb{R}^d)$ since it satisfies symmetry, non-negativity, triangle inequality, $\mu=\nu$ implies $\text{PW}_p(\mu,\nu;\hat{\theta})=0$, however, $\text{PW}_p(\mu,\nu;\hat{\theta}) = 0  $ does not imply $\mu=\nu$.
\end{proposition}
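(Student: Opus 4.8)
The plan is to verify each of the claimed metric axioms directly, leveraging the fact that $\mathrm{PW}_p(\mu,\nu;\hat\theta)$ is, by definition, equal to $\mathrm{W}_p(\hat\theta\sharp\mu,\hat\theta\sharp\nu)$ — the genuine one-dimensional Wasserstein distance between the pushforward measures $\hat\theta\sharp\mu,\hat\theta\sharp\nu\in\mathcal{P}_p(\mathbb{R})$. Since $\mathrm{W}_p$ is itself a metric on $\mathcal{P}_p(\mathbb{R})$, the positivity, symmetry, and triangle inequality for $\mathrm{PW}_p$ are inherited almost for free. Concretely: non-negativity and symmetry are immediate because $\mathrm{W}_p(\hat\theta\sharp\mu,\hat\theta\sharp\nu)\geq 0$ and $\mathrm{W}_p$ is symmetric in its arguments. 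For the triangle inequality, given $\mu,\nu,\rho\in\mathcal{P}_p(\mathbb{R}^d)$, I would write $\mathrm{PW}_p(\mu,\nu;\hat\theta) = \mathrm{W}_p(\hat\theta\sharp\mu,\hat\theta\sharp\nu)\leq \mathrm{W}_p(\hat\theta\sharp\mu,\hat\theta\sharp\rho)+\mathrm{W}_p(\hat\theta\sharp\rho,\hat\theta\sharp\nu) = \mathrm{PW}_p(\mu,\rho;\hat\theta)+\mathrm{PW}_p(\rho,\nu;\hat\theta)$, using the triangle inequality for $\mathrm{W}_p$ on $\mathcal{P}_p(\mathbb{R})$. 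The implication $\mu=\nu\Rightarrow\mathrm{PW}_p(\mu,\nu;\hat\theta)=0$ is trivial: if $\mu=\nu$ then $\hat\theta\sharp\mu=\hat\theta\sharp\nu$, so the pushforwards coincide and their Wasserstein distance vanishes.

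The substantive part is showing the converse fails, i.e.\ exhibiting $\mu\neq\nu$ with $\mathrm{PW}_p(\mu,\nu;\hat\theta)=0$. The key point is that $\mathrm{W}_p(\hat\theta\sharp\mu,\hat\theta\sharp\nu)=0$ exactly when $\hat\theta\sharp\mu=\hat\theta\sharp\nu$, i.e.\ when $\mu$ and $\nu$ have identical one-dimensional projections along the single fixed direction $\hat\theta$. In dimension $d\geq 2$ this is a highly non-injective operation, so it is easy to build a counterexample: e.g.\ take any nonzero vector $v\in\mathbb{R}^d$ orthogonal to $\hat\theta$ (which exists since $d\geq 2$), pick a point $x_0$, and set $\mu=\delta_{x_0}$, $\nu=\delta_{x_0+v}$. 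Then $\hat\theta^\top x_0 = \hat\theta^\top(x_0+v)$, so $\hat\theta\sharp\mu=\hat\theta\sharp\nu=\delta_{\hat\theta^\top x_0}$, giving $\mathrm{PW}_p(\mu,\nu;\hat\theta)=0$ while $\mu\neq\nu$. One should note that the hypothesis $d\geq 2$ is what makes the orthogonal complement of $\hat\theta$ nontrivial; for $d=1$ the claim as literally stated would need care, but the proposition's framing and the hypothesis $\hat\theta\neq\argmax_\theta\mathrm{W}_p(\theta\sharp\mu,\theta\sharp\nu)$ implicitly put us in the regime $d\geq 2$ where this argument applies — and indeed the whole point of sliced methods is $d\geq 2$.

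I do not expect any real obstacle here; the proposition is essentially the standard observation that a single-direction projected Wasserstein distance is only a pseudometric. The one point requiring a sentence of care is making sure the counterexample is consistent with the standing hypothesis that $\hat\theta$ is \emph{not} the argmax direction for the particular pair $(\mu,\nu)$ chosen — but this is automatic, since for $\mu=\delta_{x_0}$, $\nu=\delta_{x_0+v}$ with $v\perp\hat\theta$ and $v\neq 0$, the maximizing direction is (proportional to) $v$, not $\hat\theta$, and $\mathrm{Max\text{-}SW}_p(\mu,\nu)=\|v\|_p>0=\mathrm{PW}_p(\mu,\nu;\hat\theta)$. So the example simultaneously satisfies the hypothesis and witnesses the failure of the identity-of-indiscernibles axiom. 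The remaining steps are routine invocations of the metric properties of $\mathrm{W}_p$ on the real line, which I would cite rather than reprove.
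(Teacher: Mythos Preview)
Your proposal is correct, and for non-negativity, symmetry, the triangle inequality, and the implication $\mu=\nu\Rightarrow\mathrm{PW}_p=0$, it matches the paper's proof essentially verbatim: both simply inherit these properties from the metric structure of $\mathrm{W}_p$ on $\mathcal{P}_p(\mathbb{R})$.

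The one substantive difference is in the failure of identity of indiscernibles. You give an explicit constructive counterexample with Dirac masses $\mu=\delta_{x_0}$, $\nu=\delta_{x_0+v}$ for $v\perp\hat\theta$, $v\neq 0$, and then check that this pair also satisfies the standing hypothesis $\hat\theta\neq\argmax_\theta\mathrm{W}_p(\theta\sharp\mu,\theta\sharp\nu)$. The paper instead argues abstractly: assuming $\mathrm{PW}_p(\mu,\nu;\hat\theta)=0$ together with the hypothesis that $\hat\theta$ is not the maximizer, it deduces the existence of some $\theta'$ with $\mathrm{W}_p(\theta'\sharp\mu,\theta'\sharp\nu)>0$, and then invokes the Fourier transform (the characteristic-function identity $\mathcal{F}[\mu](t\theta')=\mathcal{F}[\theta'\sharp\mu](t)$) to conclude $\mu\neq\nu$. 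Your route is more elementary and self-contained --- no Fourier analysis is needed, and the witnessing pair is exhibited directly --- whereas the paper's route leans on the ``$\hat\theta$ not argmax'' hypothesis to produce the discriminating direction $\theta'$ without ever writing down a concrete $\mu,\nu$. Your remark that $d\geq 2$ is the operative regime (so that $\hat\theta^\perp$ is nontrivial) is also a useful clarification that the paper leaves implicit.
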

The proof for Proposition~\ref{prop:pseudo_metricity} is given in Appendix~\ref{subsec:proof:pseudo_metricity}. This result implies that the if reconstruction loss $\mathbb{E}_{X \sim p(X)} [\text{PW}_p(P_X,P_{g_\gamma(f_\phi(X))};\hat{\theta}(X))=0$, it does not imply $X=g_\gamma(f_\phi(X))$ for p-almost surely $X \in \mathcal{X}$. Therefore, a local maximum for $\max_{\theta \in \mathbb{S}^{d-1}}$ in Max-SW reconstruction (Equation~\ref{eq:reconstruction_max}) and the global maximum for $\max_{\psi \in \Psi}$ in amortized Max-SW reconstruction (Equation~\ref{eq:amortized_reconstruction} with a misspecified amortized model) cannot guarantee perfect reconstruction even when their objectives obtain $0$ values.

\textbf{Amortized Distributional Projection Optimization.} To overcome the issue, we propose to replace Max-SW in Equation~\ref{eq:reconstruction_max} with the von Mises Fisher distributional sliced Wasserstein (v-DSW) distance~\cite{nguyen2021improving}:
\begin{align}
\label{eq:vmf_reconstruction}
   & \min_{\phi \in \Phi,\gamma \in \Gamma} \mathbb{E}_{X\sim p(X)} \Big{[} \max_{\epsilon \in \mathbb{S}^{d-1}} \Big{(} \mathbb{E}_{\theta \sim \text{vMF}(\epsilon,\kappa)} 
     \nonumber \\
     & \hspace{4 em} \quad \quad \text{W}_p^p(\theta \sharp P_X,\theta \sharp P_{g_\gamma (f_\phi(X))})\Big{)}^{\frac{1}{p}} \Big{]},
\end{align}
where $\text{vMF}(\epsilon,\kappa)$ is the von Mises Fisher distribution with the mean location parameter $\epsilon \in \mathbb{S}^{d-1}$ and the concentration parameter $\kappa >0$, and $\text{v-DSW}_p(\mu,\nu;\kappa) =\max_{\epsilon \in \mathbb{S}^{d-1}} \Big{(} \mathbb{E}_{\theta \sim \text{vMF}(\epsilon,\kappa)}  \text{W}_p^p(\theta \sharp \mu,\theta \sharp \nu) \Big{)}^{\frac{1}{p}} $ is von Mises Fisher distributional sliced Wasserstein distance. The optimization can be solved by a stochastic projected gradient ascent algorithm with the vMF reparameterization trick. In particular, $\theta_1,\ldots,\theta_L$ ($L\geq 1$) is sampled i.i.d from $\text{vMF}(\epsilon,\kappa)$ via the reparameterized acceptance-rejection sampling~\cite{davidson2018hyperspherical} to approximate $\nabla_{\epsilon} \mathbb{E}_{\text{vMF}(\epsilon,\kappa)}[\text{W}_p^p(\theta \sharp \mu,\theta \sharp \nu)]$ via Monte Carlo integration. We refer the reader to Section~\ref{subsec:vMF} for more detail about the vMF distribution, its sampling algorithm, its reparameterization trick, and the stochastic gradient estimators. We present a visualization of the difference between the new amortized distributional projection optimization framework and the conventional amortized projection optimization framework in Figure~\ref{fig:amortized}. The corresponding amortized objective is:
\begin{align}
\label{eq:amortized_vmf_reconstruction}
   & \min_{\phi \in \Phi,\gamma \in \Gamma}\max_{ \psi \in \Psi}\mathbb{E}_{X \sim p(X)}\Big{(}\mathbb{E}_{\theta \sim \text{vMF}(\epsilon_{\psi,\gamma,\phi},\kappa)} 
     \nonumber \\
     & \hspace{6 em} \quad \quad \text{W}_p^p(\theta \sharp P_X,\theta \sharp P_{g_\gamma (f_\phi(X))})\Big{)}^{\frac{1}{p}},
\end{align}
where $\epsilon_{\psi,\gamma,\phi} = a_\psi(X,g_\gamma (f_\phi(X)))$. The optimization is  solved by an alternative stochastic (projected)-gradient descent-ascent algorithm with the vMF reparameterization.

\begin{theorem}
    \label{theo:reconstruction}
    For any $\epsilon \in \mathbb{S}^{d-1}$ and $0 \leq \kappa < \infty$, if  $\mathbb{E}_{X \sim p(X)} \left(\mathbb{E}_{\theta \sim \text{vMF}(\epsilon,\kappa)}  \text{W}_p^p(\theta \sharp P_X,\theta \sharp P_{g_\gamma (f_\phi(X))}) \right)^{\frac{1}{p}}= 0$, $X=g_\gamma (f_\phi(X))$ for p-almost surely $X \in \mathcal{X}$.
\end{theorem}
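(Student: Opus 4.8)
The plan is to show that the hypothesis forces $P_X = P_{g_\gamma(f_\phi(X))}$ for $p$-almost every $X$, and then to upgrade this equality of empirical measures to the identity $X = g_\gamma(f_\phi(X))$ in $\mathcal{X} \subset \sR^{dm}$, which is exactly the asserted conclusion modulo permutation invariance (recall $\mathcal{D}$ is only required to vanish on permutations of $X$). First I would observe that since the integrand $\left(\mathbb{E}_{\theta \sim \text{vMF}(\epsilon,\kappa)} \text{W}_p^p(\theta\sharp P_X, \theta\sharp P_{g_\gamma(f_\phi(X))})\right)^{1/p}$ is nonnegative, a zero expectation over $X \sim p(X)$ forces it to be $0$ for $p$-almost every $X$. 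Fix such an $X$. Then $\mathbb{E}_{\theta \sim \text{vMF}(\epsilon,\kappa)} \text{W}_p^p(\theta\sharp P_X, \theta\sharp P_{g_\gamma(f_\phi(X))}) = 0$, and since $\theta \mapsto \text{W}_p^p(\theta\sharp P_X, \theta\sharp P_{g_\gamma(f_\phi(X))})$ is nonnegative and continuous in $\theta$, while $\text{vMF}(\epsilon,\kappa)$ has full support on $\mathbb{S}^{d-1}$ when $\kappa < \infty$ (and for $\kappa = 0$ it is exactly $\mathcal{U}(\mathbb{S}^{d-1})$, still full support), we conclude $\text{W}_p(\theta\sharp P_X, \theta\sharp P_{g_\gamma(f_\phi(X))}) = 0$ for \emph{all} $\theta \in \mathbb{S}^{d-1}$. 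Hence $\theta\sharp P_X = \theta\sharp P_{g_\gamma(f_\phi(X))}$ for every direction $\theta$.

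Next I would invoke the injectivity of the Radon transform / the Cramér–Wold device: if all one-dimensional projections of two probability measures on $\sR^d$ agree, the measures themselves agree. Concretely, equality of $\theta\sharp P_X$ and $\theta\sharp P_{g_\gamma(f_\phi(X))}$ for all $\theta$ means their characteristic functions agree along every ray through the origin, hence everywhere on $\sR^d$, so $P_X = P_{g_\gamma(f_\phi(X))}$ as measures on $\sR^d$. (This is the standard fact underlying why SW, Max-SW, and DSW are metrics, so I would cite it as such rather than reprove it.)

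Finally I would translate $P_X = P_{g_\gamma(f_\phi(X))}$ back to the point-cloud level. Write $X = (x_1,\dots,x_m)$ and $g_\gamma(f_\phi(X)) = (y_1,\dots,y_m)$, so $\frac{1}{m}\sum_i \delta_{x_i} = \frac{1}{m}\sum_i \delta_{y_i}$; equality of these discrete measures means the multisets $\{x_i\}$ and $\{y_i\}$ coincide, i.e. there is a permutation $\sigma$ with $y_i = x_{\sigma(i)}$ for all $i$, equivalently $g_\gamma(f_\phi(X)) = \sigma(X)$. Since the point-cloud metric space $(\mathcal{X},\mathcal{D})$ identifies a point-cloud with all its permutations (the very definition requires $\mathcal{D}(X,\sigma(X)) = 0$), this is precisely the statement ``$X = g_\gamma(f_\phi(X))$'' in the sense meant in the paper. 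I expect the only real subtlety — the ``hard part,'' such as it is — to be the full-support argument for vMF including the boundary case $\kappa = 0$ and the careful handling of the $p$-almost-sure quantifier (making sure the null set on which the inner integrand is nonzero is the same across the argument); the rest is the standard Radon-injectivity fact plus bookkeeping about empirical measures versus ordered tuples.
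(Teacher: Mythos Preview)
Your proposal is correct and follows essentially the same route as the paper: reduce to $p$-a.s.\ $X$ by nonnegativity, use full support of $\text{vMF}(\epsilon,\kappa)$ on $\mathbb{S}^{d-1}$ to force all one-dimensional projections to agree, invoke injectivity (the paper does this via the Fourier transform, you via Cram\'er--Wold, which is the same thing), and then pass from equality of empirical measures to equality of point-clouds modulo permutation. Your version is in fact slightly more careful than the paper's in two places---you explicitly use continuity of $\theta\mapsto \text{W}_p^p$ to upgrade from ``vMF-a.s.\ $\theta$'' to ``all $\theta$'', and you explicitly treat the boundary case $\kappa=0$---but the underlying argument is identical.
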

The proof of Theorem~\ref{theo:reconstruction} is given in Appendix~\ref{subsec:proof:reconstruction}. The proof is based on proving the metricity of the \textit{non-optimal} von Mises Fisher distributional sliced Wasserstein distance (v-DSW) with the smoothness condition of the vMF distribution. It is worth noting that the proof of metricity of von Mises Fisher distributional sliced Wasserstein distance is new since the original work~\cite{nguyen2021improving} only shows the pseudo-metricity with the global optimality condition. Theorem~\ref{theo:reconstruction} indicates that a perfect reconstruction can be obtained with a local optimum for $\max_{\epsilon \in \mathbb{S}^{d-1}}$ in v-DSW reconstruction (Equation~\ref{eq:vmf_reconstruction}) and a local optimum for $\max_{\psi \in \Psi}$ in amortized v-DSW reconstruction (Equation~\ref{eq:amortized_vmf_reconstruction}).

\textbf{Comparison with SW and Max-SW:} When $\kappa \to 0$, the vMF distribution converges weakly to the uniform distribution over the unit hypersphere. Hence, we can get back the conventional sliced Wasserstein reconstruction in both Equation~\ref{eq:vmf_reconstruction} and Equation~\ref{eq:amortized_vmf_reconstruction}. When $\kappa \to \infty$, vMF distribution converges weakly to the Dirac delta at the location parameter. Therefore, we obtain Max-SW reconstruction and amortized Max-SW reconstruction in Equation~\ref{eq:vmf_reconstruction} and Equation~\ref{eq:amortized_vmf_reconstruction}, respectively. However, when $0 <\kappa<\infty$, v-DSW reconstruction and amortized v-DSW reconstruction can find a region of discriminative projecting directions while preserving the metricity for perfect reconstruction.

\subsection{Self-Attention Amortized Models}
\label{subsec:selfattention_models}

 \begin{figure}[t]
\begin{center}
    
  \begin{tabular}{c}
  \widgraph{0.45\textwidth}{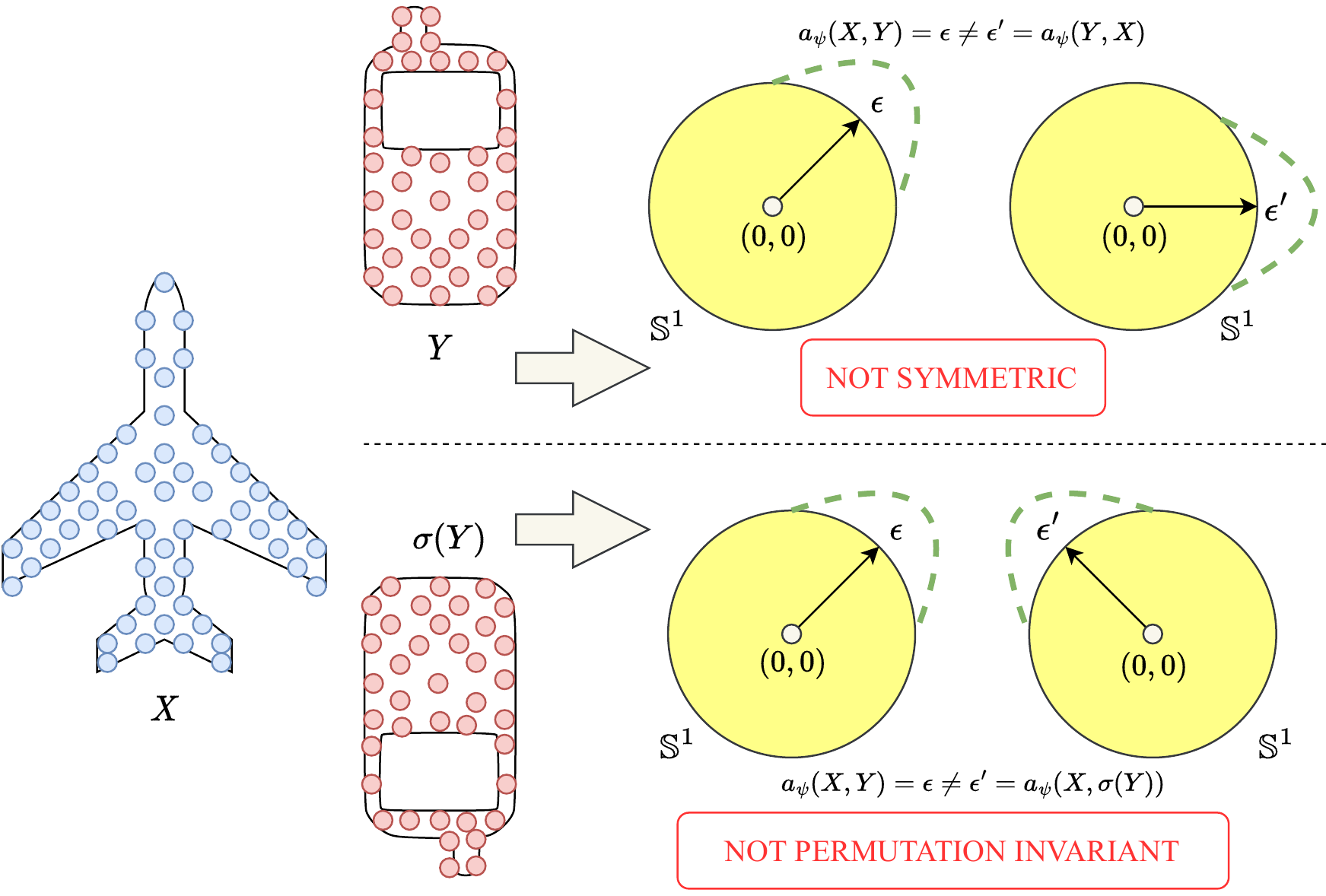}

  \end{tabular}
  \end{center}
  \vskip -0.1in
  \caption{
  \footnotesize{Visualization of an amortized model that is not symmetric and permutation invariant in two dimensions.
}
} 
  \label{fig:invariance}
   \vskip -0.2in
\end{figure}

We now discuss the parameterization of the amortized model for amortized optimization.

\textbf{Permutation Invariance and Symmetry. } Let $X$ and $Y$ be two point-clouds, the optimal slicing distribution $\text{vMF}(\epsilon^\star,\kappa)$ of v-DSW between $P_X$ and $P_Y$ can be obtained by running Algorithm~\ref{alg:trainingvDSW} in Appendix~\ref{subsec:training_algorithms}. Clearly, $\text{vMF}(\epsilon^\star,\kappa)$ is invariant to the permutation of the supports since $P_{\sigma(X)} = P_X$ and $P_{\sigma(Y)} = P_Y$ for a permutation function $\sigma$. Moreover, the optimal slicing distribution $\text{vMF}(\epsilon^\star,\kappa)$ is also unchanged when we exchange $P_X$ and $P_Y$ since v-DSW is symmetric. However, the current amortized models (see Definition~\ref{def:linear_model}, Definitions~\ref{def:glinear_model}-\ref{def:nonlinear_model} in Appendix~\ref{subsec:additional_amortized_models}) are not permutation invariant and symmetric, namely, $a_\psi (X,Y) \neq a_\psi (X,\sigma(Y)) $ and $ a_\psi (X,Y)\neq a_\psi (Y,X)$ . Therefore, the current amortized models could be strongly misspecified. We show a visualization of an amortized model that is not symmetric and permutation invariant in Figure~\ref{fig:invariance}. To address the issue, we propose amortized models that are symmetric and permutation invariant based on the self-attention mechanism.

\textbf{Self-Attention Mechanism.} Attention is well-known for its effectiveness in learning long-range dependencies when data are sequences such as text~\cite{devlin-etal-2019-bert,liu2019roberta,brown2020language} or speech~\cite{li2019improving,wang2020transformer}. This mechanism was then successfully generalized to other data types including image~\cite{carion2020end,dosovitskiy2020image}, video~\cite{sun2019videobert}, graph~\cite{dwivedi2021generalization}, point-cloud~\cite{zhao2021point,guo2021pct}, to name a few.
We now revisit the attention mechanism~\cite{vaswani2017attention}.
Given $Q, K \in \sR^{m \times d_k}, V \in \sR^{m \times d_v}$, the \emph{scaled dot-product attention} operator is defined as:
\begin{align}
\label{eq:scaled_dot_product_attention}
    \attention(Q, K, V) = \text{softmax}_{\rm{row}} \left[ \frac{QK^T}{\sqrt{d_k}} \right] V  
\end{align}
where $\text{softmax}_{\rm{row}}$ denotes the row-wise softmax function. In the self-attention mechanism, the query matrix Q, the key matrix K, and the value matrix V are usually computed by projecting the input sequence X into different subspaces. Thus, the self-attention mechanism is given as follows.
Given $X \in \sR^{m \times d}$, the \emph{self-attention} operator is:
\begin{align}
\label{eq:self_attention}
    \gA_{\zeta}(X) = \attention(XW_q, X W_k, X W_v)
\end{align}
where $W_q, W_k \in \sR^{d \times d_k}, W_v \in \sR^{d \times d_v}$ and $\zeta = (W_q, W_k, W_v)$. The self-attention operator is infamous for its quadratic memory and computational costs. In particular, given an input sequence of length $m$, both the time and space complexity are $\gO(m^2)$. Since we focus on the sliced Wasserstein setting where the computational complexity should be at most $\mathcal{O}(m\log m)$, the conventional self-attention is not appropriate. Several works~\cite{li2020linear,katharopoulos2020transformers,wang2020linformer,shen2021efficient} have been proposed to reduce the overall complexity from $\gO(m^2)$ to $\gO(m)$. In this paper, we utilize two linear complexity variants of attention which are efficient attention~\cite{shen2021efficient} and linear attention~\cite{wang2020linformer}. 
Given $X \in \sR^{m \times d}$, the \emph{efficient self-attention} is defined as: 
\begin{align}
\label{eq:efficient_self_attention}
     &\gE\gA_{\zeta}(X) =\nonumber \\ & \text{softmax}_{\rm{row}}(X W_q) \left[ \text{softmax}_{\rm{col}}(X W_k)^T (X W_v) \right]
\end{align}
where $W_q, W_k \in \sR^{d \times d_k}, W_v \in \sR^{d \times d_v}$, $\zeta = (W_q, W_k, W_v)$, and $\text{softmax}_{\rm{col}}$ denotes applying the softmax function column-wise. 
The \emph{linear self-attention} is:
\begin{align}
\label{eq:linear_self_attention}
    \gL\gA_{\zeta}(X) = \attention(X W_q, W_{k1} X W_{k2}, W_{v1} X W_{v2})
\end{align}
where $W_q, W_{k2} \in \sR^{d \times d_k}, W_{v2} \in \sR^{d \times d_v}$, $W_{k1}, W_{v1} \in \sR^{k \times n}$, and $\zeta = (W_q, W_{k1}, W_{k2}, W_{v1}, W_{v2})$. The projected dimension $k$ is chosen such that $m \gg k$ to reduce the memory and space consumption significantly.


\textbf{Self-Attention Amortized Models:} Based on the self-attention mechanism, we introduce the self-attention amortized model which is permutation invariant and symmetric. Formally, the \textit{self-attention amortized model} is defined as:

\begin{definition}\label{def:efficient_attention_model} 
Given $X,Y \in  \sR^{dm}$, the \emph{self-attention amortized model} is defined as: 
\begin{align}
    a_\psi (X,Y)=\frac{\gA_{\zeta}(X'^\top)^\top \vone_{m} +  \gA_{\zeta}(Y'^\top)^\top \vone_{m}}{||\gA_{\zeta}(X'^\top)^\top  \vone_{m} +  \gA_{\zeta}(Y'^\top)^\top \vone_{m}||_2},
\end{align}
where $X'$ and $Y'$ are matrices of size $d\times m$ that are reshaped from the concatenated vectors $X$ and $Y$ of size $dm$, $\vone_{m}$ is the $m$-dimensional vector whose all entries are $1$ and $\psi =(\zeta)$.
\end{definition}

By replacing the conventional self-attention with the linear self-attention and the efficient self-attention, we obtain the \textit{linear self-attention amortized model} and the \textit{efficient self-attention amortized model}.

\begin{proposition}
    \label{prop:invariance}
    Self-attention amortized models are symmetric and permutation invariant.
\end{proposition}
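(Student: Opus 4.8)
The two assertions decouple, and symmetry comes for free. Write $N(X,Y):=\gA_{\zeta}(X'^\top)^\top\vone_{m}+\gA_{\zeta}(Y'^\top)^\top\vone_{m}$ for the unnormalized numerator in Definition~\ref{def:efficient_attention_model}. Since addition is commutative we have $N(X,Y)=N(Y,X)$, and the denominator satisfies $\|N(X,Y)\|_2=\|N(Y,X)\|_2$, so $a_\psi(X,Y)=a_\psi(Y,X)$; the efficient and linear variants only change the inner operator, so the same reasoning applies. Granted symmetry, permutation invariance need only be checked in the first argument, i.e.\ $a_\psi(\sigma(X),Y)=a_\psi(X,Y)$ for a permutation $\sigma$ of $[m]$. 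Permuting the points of $X$ by $\sigma$ merely reorders the $m$ rows of $X'^\top\in\R^{m\times d}$, so everything reduces to one claim: the map $Z\mapsto\gA_{\zeta}(Z)^\top\vone_{m}$, the sum of the rows of the attention output, is invariant under row permutations of $Z$.

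I would deduce this from two facts. First, \emph{permutation equivariance} of the attention operator: $\gA_{\zeta}(PZ)=P\,\gA_{\zeta}(Z)$ for every $m\times m$ permutation matrix $P$. Second, the row-sum annihilates $P$: since $\vone_{m}^\top P=\vone_{m}^\top$, we get $(P\,\gA_{\zeta}(Z))^\top\vone_{m}=\gA_{\zeta}(Z)^\top P^\top\vone_{m}=\gA_{\zeta}(Z)^\top\vone_{m}$. Together these make each summand of $N$, hence $N$ and hence $a_\psi$, invariant under permuting the points of $X$ (and, by symmetry, of $Y$), which is the proposition.

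It then remains to verify equivariance for each attention operator, a short direct computation. For the scaled dot-product self-attention of \eqref{eq:self_attention} the ingredients are: $(PZ)W=P(ZW)$ for any weight matrix $W$; the row-wise softmax commutes with a row permutation of its argument, $\softmax_{\mathrm{row}}(PM)=P\,\softmax_{\mathrm{row}}(M)$; and a column permutation of its argument only reindexes the entries over which each row normalizes, whence $\softmax_{\mathrm{row}}\!\big[(PQ)(PK)^\top/\sqrt{d_k}\big]=P\,\softmax_{\mathrm{row}}\!\big[QK^\top/\sqrt{d_k}\big]\,P^\top$. Right-multiplying by $PV$ and cancelling $P^\top P=I$ gives $\gA_{\zeta}(PZ)=P\,\gA_{\zeta}(Z)$ with $Q=ZW_q$, $K=ZW_k$, $V=ZW_v$. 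For the efficient self-attention of \eqref{eq:efficient_self_attention} one further uses that the column-wise softmax commutes with row permutations, $\softmax_{\mathrm{col}}(PM)=P\,\softmax_{\mathrm{col}}(M)$; substituting $PZ$ into the definition and cancelling the internal $P^\top P$ yields $\gE\gA_{\zeta}(PZ)=P\,\gE\gA_{\zeta}(Z)$ in the same fashion.

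The step that needs the most care --- and what I expect to be the real obstacle --- is the linear self-attention of \eqref{eq:linear_self_attention}: the Linformer-style projections $W_{k1},W_{v1}$ act along the sequence dimension, so $W_{k1}(PZ)=(W_{k1}P)Z$ with $W_{k1}P\neq W_{k1}$ in general, and the equivariance computation above does not carry over verbatim. Here one should either read the proposition as covering the conventional and efficient self-attention amortized models, or note that equivariance (hence permutation invariance of that model) holds exactly only when the sequence-length projections are themselves permutation symmetric, and otherwise only approximately. For the conventional and efficient models, the proof is precisely the equivariance-plus-row-sum argument above.
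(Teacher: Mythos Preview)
Your approach coincides with the paper's: symmetry follows from the additive, parameter-shared form of the numerator, and permutation invariance is obtained by composing permutation equivariance of the attention operator with the invariance of the row-sum pooling. The paper's proof writes out the chain of equalities for the scaled dot-product case and then simply asserts that ``similarly, the proof holds for both linear self-attention and efficient self-attention.''

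Your treatment is more careful than the paper's on one point, and the caveat you raise about the linear variant is in fact a genuine issue that the paper's own proof elides rather than a gap in your argument. The Linformer projections $W_{k1},W_{v1}\in\R^{k\times m}$ mix along the sequence axis, so $W_{k1}(PZ)=(W_{k1}P)Z\neq W_{k1}Z$ for a general permutation matrix $P$, and neither the equivariance $\gL\gA_{\zeta}(PZ)=P\,\gL\gA_{\zeta}(Z)$ nor the invariance of the pooled output $\gL\gA_{\zeta}(Z)^\top\vone_{m}$ holds for arbitrary learned $W_{k1},W_{v1}$. The paper's one-line extension to the linear case does not address this; as stated, the proposition is correct for the conventional and efficient self-attention amortized models, and your flagging of the linear case is warranted.
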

The proof of Proposition~\ref{prop:invariance} is given in Appendix~\ref{subsec:proof:invariance}. The symmetry follows directly from the definition of the self-attention amortized models. The permutation invariance is proved by showing that the self-attention operators combined with average pooling
are permutation invariant. 

\textbf{Comparison with Set Transformer.} The authors in~\cite{lee2019set} also proposed a method to guarantee the permutation invariant of sets. There are two main differences between our works and theirs. Firstly, Set Transformer introduced a new attention mechanism and a new Transformer architecture while we only present an approach to apply \emph{any} attention mechanism to preserve the permutation invariance property of amortized models. Secondly, Set Transformer maintains the permutation invariance property by using a learnable multi-head attention as the aggregation scheme. We instead still rely on average pooling, a conventional permutation invariant aggregation scheme, to accumulate features learned by self-attention operations. Nevertheless, our works are orthogonal to Set Transformer, in other words, it is possible to apply techniques in Set Transformer to our attention-based amortized models. We leave this investigation for future work. 

\begin{table}[t!]
    \caption{Reconstruction and transfer learning performance on the ModelNet40 dataset. CD and SW are multiplied by 100.}
    \vskip 0.1in
    \label{table:short_reconstruction_result}
    \centering
    \scalebox{0.6}{
        \begin{tabular}{lccccc}
        \toprule
        Method & CD$(10^{-2}, \downarrow)$ & SW$(10^{-2}, \downarrow)$ & EMD$(\downarrow)$ & Acc$ (\uparrow)$ & Time $(\downarrow)$ \\
        \midrule
        CD & 1.25 $\pm$ 0.03 & 681.20 $\pm$ 16.73 & 653.52 $\pm$ 10.43 & 86.28 $\pm$ 0.34 & 95 \\
        EMD & 0.40 $\pm$ 0.00 & 94.54 $\pm$ 2.90 & 168.60 $\pm$ 1.57 & 88.45 $\pm$ 0.20 & 208 \\
        \midrule
        SW & 0.68 $\pm$ 0.01 & 89.61 $\pm$ 3.88 & 191.12 $\pm$ 2.88 & 87.90 $\pm$ 0.27 & 106 \\
        Max-SW & 0.68 $\pm$ 0.01 & 88.22 $\pm$ 1.45 & 190.23 $\pm$ 0.1 & 87.97 $\pm$ 0.14 & 116 \\
        ASW & 0.69 $\pm$ 0.01 & 89.42 $\pm$ 5.07 & 192.03 $\pm$ 3.09 & 87.78 $\pm$ 0.20 & 103 \\
        v-DSW & \textbf{0.67 $\pm$ 0.00} & 85.03 $\pm$ 3.31 & 187.75 $\pm$ 2.00 & 87.83 $\pm$ 0.40 & 633 \\
        $\gL$-Max-SW & 1.06 $\pm$ 0.03 & 121.85 $\pm$ 5.77 & 236.87 $\pm$ 3.42 & 87.70 $\pm$ 0.23 & \textbf{94} \\
        $\gG$-Max-SW & 12.11 $\pm$ 0.29 & 851.07 $\pm$ 2.11 & 829.28 $\pm$ 5.53 & 87.49 $\pm$ 0.36 & 97 \\
        $\gN$-Max-SW & 7.38 $\pm$ 3.29 & 618.74 $\pm$ 153.87 & 648.32 $\pm$ 117.03 & 87.43 $\pm$ 0.15 & 96 \\
        \midrule
        $\gL$v-DSW & 0.68 $\pm$ 0.00 & 85.32 $\pm$ 0.54 & 188.32 $\pm$ 0.23 & 87.70 $\pm$ 0.34 & 114 \\
        $\gG$v-DSW & 0.68 $\pm$ 0.01 & 82.77 $\pm$ 0.48 & 187.04 $\pm$ 1.11 & 87.75 $\pm$ 0.19 & 117 \\
        $\gN$v-DSW & \textbf{0.67 $\pm$ 0.00} & 83.47 $\pm$ 0.49 & 186.66 $\pm$ 0.81 & 87.84 $\pm$ 0.07 & 115 \\
        $\gA$v-DSW & 0.67 $\pm$ 0.01 & 83.08 $\pm$ 1.22 & 186.27 $\pm$ 0.56 & 88.05 $\pm$ 0.17 & 230 \\
        $\gE\gA$v-DSW & 0.68 $\pm$ 0.01 & 82.05 $\pm$ 0.40 & 186.46 $\pm$ 0.25 & 88.07 $\pm$ 0.21 & 125 \\
        $\gL\gA$v-DSW & 0.68 $\pm$ 0.00 & \textbf{81.03 $\pm$ 0.18} & \textbf{185.26 $\pm$ 0.31} & \textbf{88.28 $\pm$ 0.13} & 123 \\
        \bottomrule
        \end{tabular}
    }
    \vskip -0.1in
\end{table}
\begin{figure*}[!t]
\begin{center}
\begin{tabular}{c}
\widgraph{1\textwidth}{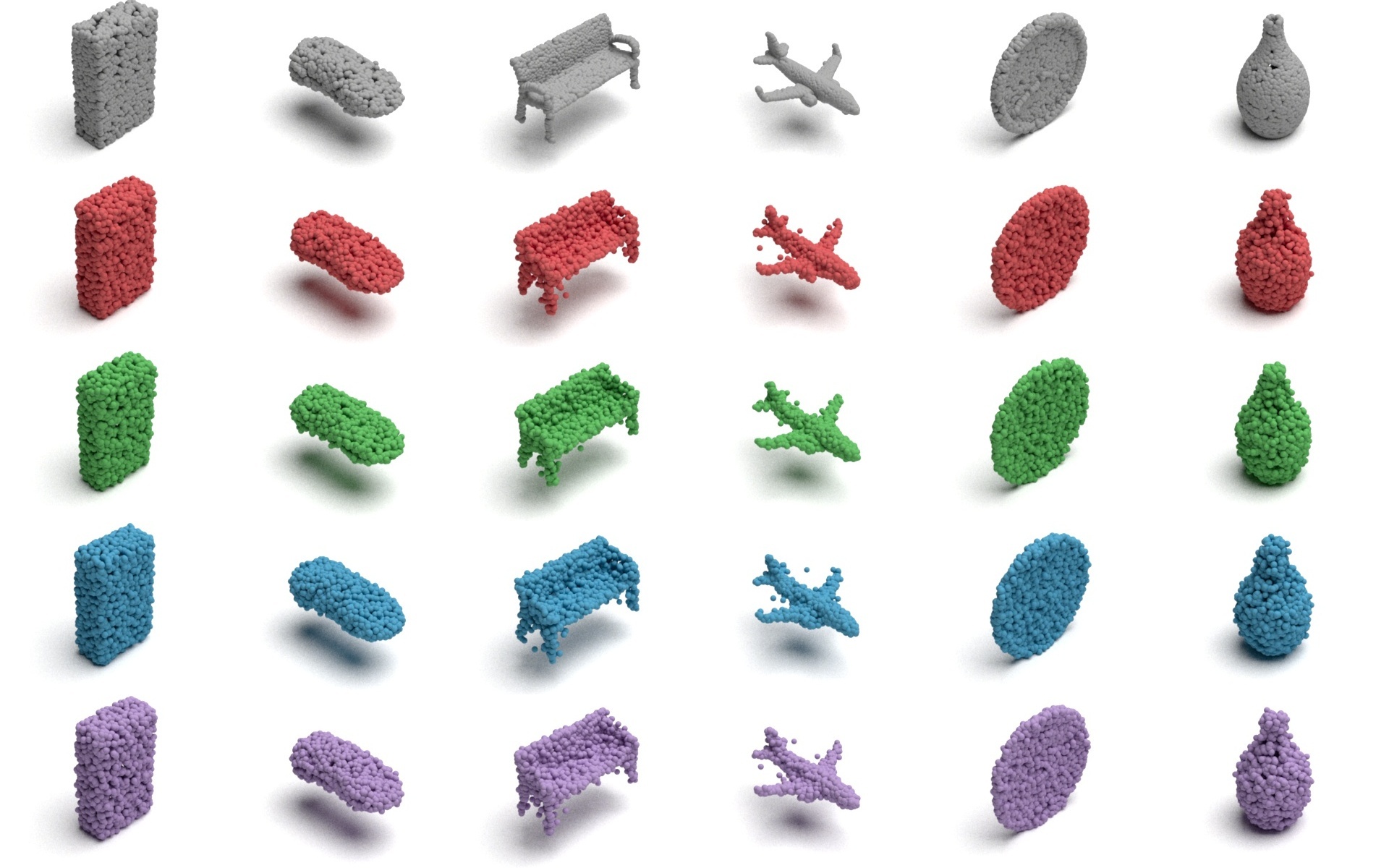}
\end{tabular}
\end{center}
\vskip -0.1in
\caption{
\footnotesize{Qualitative results of reconstructing point-clouds in the ShapeNet Core-55 dataset. From top to bottom, the point-clouds are input, SW, Max-SW (T = 50), v-DSW (T = 50), and $\gL\gA$v-DSW respectively.}
} 
\label{fig:reconstructed_point_clouds_short}
\vskip -0.1in
\end{figure*}
\section{Experiments}
\label{experiments}
To verify the effectiveness of our proposal, we evaluate our methods on the point-cloud reconstruction task and its two downstream tasks including transfer learning and point-cloud generation. Three important questions we want to answer are: 
\begin{enumerate}
    \item  Does the sub-optimality issue of amortized Max-SW occur when working with point-clouds and does replacing Max-SW with v-DSW alleviate the problem? 
    \item Does the proposed amortized distribution projection optimization framework improve the performance over the conventional amortized projection optimization  framework and commonly used distances e.g., Chamfer distance, Earth Mover Distance (Wasserstein distance), SW, Max-SW, adaptive SW (ASW)~\cite{Nguyen2021PointSetDistances}, and v-DSW? \item Are self-attention amortized models better than the previous misspecified amortized models in~\cite{nguyen2022amortized}?
\end{enumerate}
\begin{table}[t!]
    \caption{Comparison between amortized models when approximating von Mises Fisher distributional sliced Wasserstein (v-DSW). T denotes the number of projected sub-gradient ascent iterations.}
    \vskip 0.1in
    \label{table:vDSW_approximation}
    \centering
    \scalebox{0.9}{
        \begin{tabular}{lccc}
        \toprule
        Method & T & Distance $(\uparrow)$ & Time $(\downarrow)$\\
        \midrule
        $\gL$v-DSW & 1 & 52.73 & 0.06 \\
        $\gG$v-DSW & 1 & 50.73 & 0.07 \\
        $\gN$v-DSW & 1 & 51.89 & 0.07 \\
        $\gA$v-DSW & 1 & 53.07 & 1.00 \\
        $\gE\gA$v-DSW & 1 & 53.17 & 0.17 \\
        $\gL\gA$v-DSW & 1 & \textbf{53.83} & 0.14 \\
        \midrule
        v-DSW & 1 & 51.87 & 0.1 \\
        v-DSW & 5 & 51.90 & 0.33 \\
        v-DSW & 10 & 52.65 & 0.5 \\
        v-DSW & 50 & 53.16 & 2.00 \\
        v-DSW & 100 & 54.39 & 4.00 \\
        \bottomrule
        \end{tabular}
    }
    \vskip -0.1in
\end{table}

\begin{table*}[!t]
    \caption{Performance comparison of point-cloud generation on the chair category of ShapeNet. The values of JSD, MMD-CD, and MMD-EMD are multiplied by 100.}
    \vskip 0.1in
    \label{table:short_generation_result}
    \centering
    \scalebox{0.9}{
        \begin{tabular}{lccccccc}
        \toprule
        Method & JSD $(\times 10^{-2}, \downarrow)$ & \multicolumn{2}{c}{MMD $(\times 10^{-2}, \downarrow)$} & \multicolumn{2}{c}{COV $(\%, \uparrow)$} & \multicolumn{2}{c}{1-NNA $(\%, \downarrow)$} \\
        \cmidrule(lr){3-4} \cmidrule(lr){5-6} \cmidrule(lr){7-8}
        & & CD & EMD & CD & EMD & CD & EMD \\
        \midrule
        CD & 17.88 $\pm$ 1.14 & 1.12 $\pm$ 0.02 & 17.19 $\pm$ 0.36 & 23.73 $\pm$ 1.69 & 10.83 $\pm$ 0.89 & 98.45 $\pm$ 0.10 & 100.00 $\pm$ 0.00 \\
        EMD & 5.15 $\pm$ 1.52 & 0.61 $\pm$ 0.09 & 10.37 $\pm$ 0.61 & 41.65 $\pm$ 2.19 & 42.54 $\pm$ 2.42 & 87.76 $\pm$ 1.46 & 87.30 $\pm$ 1.22 \\
        \midrule
        SW & 1.56 $\pm$ 0.06 & 0.72 $\pm$ 0.02 & 10.80 $\pm$ 0.11 & 38.55 $\pm$ 0.43 & 45.35 $\pm$ 0.48 & 89.91 $\pm$ 1.17 & 88.28 $\pm$ 0.70 \\
        Max-SW & 1.63 $\pm$ 0.32 & 0.74 $\pm$ 0.01 & 10.84 $\pm$ 0.08 & 40.47 $\pm$ 1.04 & 47.81 $\pm$ 0.78 & 91.46 $\pm$ 0.72 & 89.93 $\pm$ 0.86 \\
        ASW & 1.75 $\pm$ 0.38 & 0.78 $\pm$ 0.05 & 11.27 $\pm$ 0.38 & 38.16 $\pm$ 2.15 & 45.45 $\pm$ 1.40 & 91.21 $\pm$ 0.40 & 89.36 $\pm$ 0.40 \\
        v-DSW & 1.79 $\pm$ 0.17 & 0.72 $\pm$ 0.02 & 10.73 $\pm$ 0.20 & 37.76 $\pm$ 0.71 & 45.49 $\pm$ 1.37 & 90.23 $\pm$ 0.13 & 88.33 $\pm$ 0.95 \\
        \midrule
        $\gL$v-DSW & 1.67 $\pm$ 0.07 & 0.77 $\pm$ 0.04 & 11.10 $\pm$ 0.33 & 37.91 $\pm$ 1.84 & 45.64 $\pm$ 2.30 & 90.42 $\pm$ 0.53 & 88.82 $\pm$ 0.38 \\
        $\gG$v-DSW & 1.56 $\pm$ 0.22 & 0.75 $\pm$ 0.02 & 10.99 $\pm$ 0.11 & 37.81 $\pm$ 1.70 & 45.69 $\pm$ 0.46 & 90.32 $\pm$ 0.38 & 88.26 $\pm$ 0.28 \\
        $\gN$v-DSW & \textbf{1.44 $\pm$ 0.06} & 0.75 $\pm$ 0.02 & 10.95 $\pm$ 0.09 & 38.40 $\pm$ 1.34 & 46.28 $\pm$ 2.06 & 90.15 $\pm$ 0.80 & 88.65 $\pm$ 0.82 \\
        $\gE\gA$v-DSW & 1.73 $\pm$ 0.21 & \textbf{0.71 $\pm$ 0.04} & \textbf{10.70 $\pm$ 0.26} & 40.03 $\pm$ 1.28 & \textbf{48.01 $\pm$ 1.07} & 89.98 $\pm$ 0.57 & 88.55 $\pm$ 0.38 \\
        $\gL\gA$v-DSW & 1.54 $\pm$ 0.09 & 0.72 $\pm$ 0.03 & 10.74 $\pm$ 0.35 & \textbf{40.62 $\pm$ 1.39} & 45.84 $\pm$ 1.23 & \textbf{89.44 $\pm$ 0.28} & \textbf{87.79 $\pm$ 0.37} \\
        \bottomrule
        \end{tabular}
    }
    \vskip -0.1in
\end{table*}
\textbf{Experiment settings:} Our settings\footnote{Code for the paper  is published at \url{https://github.com/hsgser/Self-Amortized-DSW}.}, which can be found in Appendix~\ref{subsec:ae_details}, are identical to the setting in the paper of ASW. We compare our methods, amortized v-DSW, with the following loss functions: Chamfer discrepancy (CD), Earth-mover distance (EMD), SW, Max-SW, adaptive SW (ASW), v-DSW, and amortized Max-SW variants. For amortized models, we consider 6 different ones. The prefix $\gL, \gG$, and $\gN$ denote the linear, generalized linear, and non-linear amortized models in~\cite{nguyen2022amortized}, respectively. $\gA$, $\gE\gA$, and $\gL\gA$ represent self-attention, efficient self-attention, and linear self-attention, respectively. Implementation details for baseline distances and amortized models are given in Appendices~\ref{subsec:baseline_details} and~\ref{subsec:amortize_details}, respectively. Each experiment was run over three different random seeds. We report the average performance along with the standard deviation for each entity. All experiments are run on NVIDIA V100 GPUs.

\textbf{Comparison with CD and EMD}: The main focus of the paper is to compare the new amortized framework with the conventional amortized framework and sliced Wasserstein variants. The results with CD and EMD are provided only for completeness.  In addition, we found that there is an unfair comparison between EMD and sliced Wasserstein variants in the ASW's paper. In particular, the EMD loss is normalized by the number of points in a point cloud while SW variants are not. To fix the aforementioned issue, we modified the implementation of the EMD loss by scaling it by the number of points (2048 in this case). As a ``perfect" objective, EMD performs better than all SW variants. However, EMD suffers from huge computational costs compared to SW variants

\textbf{Point-cloud reconstruction:} Following ASW~\cite{Nguyen2021PointSetDistances}, we measure the reconstruction performance of different autoencoders on the ModelNet40 dataset~\cite{wu20153d} using three discrepancies: Chamfer discrepancy (CD), sliced Wasserstein distance (SW), and EMD. The quantitative results are summarized in Table~\ref{table:short_reconstruction_result}. For each method, we only report the best performing (based on EMD) model among all choices of hyper-parameters. Full quantitative results (including std) can be found in Table~\ref{table:full_reconstruction_result}. Our methods achieve the best performance in all three discrepancies. In contrast, autoencoders with amortized Max-SW losses fail in this scenario due to the sub-optimality and losing metricity issues that we discussed in Section~\ref{subsec:amortized_projection}. In addition, amortized v-DSW losses have smaller standard deviations over 3 runs than v-DSW. Moreover, using amortized optimization reduces the training time compared to the conventional computation using the projected sub-gradient ascent algorithm (e.g. Max-SW and v-DSW). For example, training one iteration of autoencoder using $\gL\gA$v-DSW only takes 123 seconds while using v-DSW costs 633 seconds. In terms of amortized models, attention-based amortized models lead to lower EMD between reconstruction and input. Qualitative results are given in Figure~\ref{fig:reconstructed_point_clouds_short}, showing the success of our methods in reconstructing 3D point-clouds. Full qualitative results are reported in Figure~\ref{fig:reconstructed_point_clouds_full}.

\textbf{Amortization Gaps:} To validate the advantage of self-attention amortized models over the previous misspecified amortized models, we compare their effectiveness in approximating v-DSW. We create a dataset by sampling 1000 pairs of point-clouds from the ShapeNet Core-55 dataset. Due to the memory constraint when solving amortized optimization, the dataset is divided into 10 batches of size 100. We compute v-DSW and its amortized versions between all pairs of point-clouds and report their average loss values in Table~\ref{table:vDSW_approximation}. Compared to previous misspecified amortized models, attention-based amortized models produce higher losses which are closer to the conventional computation of v-DSW (T = 100). To achieve the same level as efficient/linear self-attention amortized models, one needs to run more than 50 sub-gradient iterations, which is more than 10 times slower. 

\textbf{Transfer learning:} We further feed the latent vectors learned by the above autoencoders into a classifier. Following the settings in ASW's paper, we train our classifier for 500 epochs with a batch size of 256. The optimizer is the same as that in the reconstruction experiment. Table~\ref{table:short_reconstruction_result} illustrates the classification result. Again, we see a boost in accuracy when using self-attention amortized v-DSW.

\textbf{Point-cloud generation:} We also evaluate our methods on the 3D point-cloud generation task. Following~\cite{achlioptas2018learning}, the chair category of ShapeNet is divided into train/valid/test sets in an 85/5/10 ratio. We train each autoencoder on the train set for 100 epochs and evaluate on the valid set. The generator is then trained to generate latent codes learned by the autoencoder, same as~\cite{achlioptas2018learning}. For evaluation, the same set of metrics in~\cite{yang2019pointflow} is used. 
The quantitative results of the test set are given in Table~\ref{table:short_generation_result}. Our methods yield the best performance in all metrics. In addition, attention-based amortized models lead to higher performance than previous amortized models in all metrics except for JSD. Full quantitative results are reported in Table~\ref{table:full_generation_result}.

\section{Conclusion}
\label{conclusion}

We have proposed a self-attention amortized distributional projection optimization framework which uses a self-attention amortized model to predict the best discriminative distribution over projecting direction for each pair of probability measures. The efficient self-attention mechanism helps to inject the geometric inductive biases which are permutation invariance and symmetry into the amortized model while remaining fast computation. Furthermore, the amortized distribution projection optimization framework guarantees the metricity for all pairs of probability measures while the amortization gap still exists. On the experimental side, we compare the new proposed framework to the conventional amortized projection optimization framework and other widely-used distances in the point-cloud reconstruction application and its two downstream tasks including transfer learning and point-cloud generation to show the superior performance of the proposed framework.

\section*{Acknowledgements}
Nhat Ho acknowledges support from the NSF IFML 2019844 and the NSF AI Institute for Foundations of Machine Learning.
\clearpage
\bibliography{example_paper}

\begin{thebibliography}{63}
\providecommand{\natexlab}[1]{#1}
\providecommand{\url}[1]{\texttt{#1}}
\expandafter\ifx\csname urlstyle\endcsname\relax
  \providecommand{\doi}[1]{doi: #1}\else
  \providecommand{\doi}{doi: \begingroup \urlstyle{rm}\Url}\fi

\bibitem[Achlioptas et~al.(2018)Achlioptas, Diamanti, Mitliagkas, and
  Guibas]{achlioptas2018learning}
Achlioptas, P., Diamanti, O., Mitliagkas, I., and Guibas, L.
\newblock Learning representations and generative models for 3d point clouds.
\newblock In \emph{International conference on machine learning}, pp.\  40--49.
  PMLR, 2018.

\bibitem[Alvarez-Melis \& Fusi(2020)Alvarez-Melis and
  Fusi]{alvarez2020geometric}
Alvarez-Melis, D. and Fusi, N.
\newblock Geometric dataset distances via optimal transport.
\newblock \emph{Advances in Neural Information Processing Systems},
  33:\penalty0 21428--21439, 2020.

\bibitem[Amos(2022)]{amos2022tutorial}
Amos, B.
\newblock Tutorial on amortized optimization for learning to optimize over
  continuous domains.
\newblock \emph{arXiv preprint arXiv:2202.00665}, 2022.

\bibitem[Arjovsky et~al.(2017)Arjovsky, Chintala, and
  Bottou]{arjovsky2017wasserstein}
Arjovsky, M., Chintala, S., and Bottou, L.
\newblock {W}asserstein generative adversarial networks.
\newblock In \emph{International Conference on Machine Learning}, pp.\
  214--223, 2017.

\bibitem[Barrow et~al.(1977)Barrow, Tenenbaum, Bolles, and
  Wolf]{barrow1977parametric}
Barrow, H.~G., Tenenbaum, J.~M., Bolles, R.~C., and Wolf, H.~C.
\newblock Parametric correspondence and chamfer matching: Two new techniques
  for image matching.
\newblock Technical report, SRI INTERNATIONAL MENLO PARK CA ARTIFICIAL
  INTELLIGENCE CENTER, 1977.

\bibitem[Bhushan~Damodaran et~al.(2018)Bhushan~Damodaran, Kellenberger,
  Flamary, Tuia, and Courty]{bhushan2018deepjdot}
Bhushan~Damodaran, B., Kellenberger, B., Flamary, R., Tuia, D., and Courty, N.
\newblock Deepjdot: Deep joint distribution optimal transport for unsupervised
  domain adaptation.
\newblock In \emph{Proceedings of the European Conference on Computer Vision
  (ECCV)}, pp.\  447--463, 2018.

\bibitem[Bonneel et~al.(2015)Bonneel, Rabin, Peyr{\'e}, and
  Pfister]{bonneel2015sliced}
Bonneel, N., Rabin, J., Peyr{\'e}, G., and Pfister, H.
\newblock Sliced and {R}adon {W}asserstein barycenters of measures.
\newblock \emph{Journal of Mathematical Imaging and Vision}, 1\penalty0
  (51):\penalty0 22--45, 2015.

\bibitem[Brown et~al.(2020)Brown, Mann, Ryder, Subbiah, Kaplan, Dhariwal,
  Neelakantan, Shyam, Sastry, Askell, et~al.]{brown2020language}
Brown, T., Mann, B., Ryder, N., Subbiah, M., Kaplan, J.~D., Dhariwal, P.,
  Neelakantan, A., Shyam, P., Sastry, G., Askell, A., et~al.
\newblock Language models are few-shot learners.
\newblock \emph{Advances in neural information processing systems},
  33:\penalty0 1877--1901, 2020.

\bibitem[Carion et~al.(2020)Carion, Massa, Synnaeve, Usunier, Kirillov, and
  Zagoruyko]{carion2020end}
Carion, N., Massa, F., Synnaeve, G., Usunier, N., Kirillov, A., and Zagoruyko,
  S.
\newblock End-to-end object detection with transformers.
\newblock In \emph{European conference on computer vision}, pp.\  213--229.
  Springer, 2020.

\bibitem[Chang et~al.(2015)Chang, Funkhouser, Guibas, Hanrahan, Huang, Li,
  Savarese, Savva, Song, Su, Xiao, Yi, and Yu]{shapenet2015}
Chang, A.~X., Funkhouser, T., Guibas, L., Hanrahan, P., Huang, Q., Li, Z.,
  Savarese, S., Savva, M., Song, S., Su, H., Xiao, J., Yi, L., and Yu, F.
\newblock {ShapeNet: {A}n Information-Rich 3D Model Repository}.
\newblock Technical Report arXiv:1512.03012 [cs.GR], Stanford University ---
  Princeton University --- Toyota Technological Institute at Chicago, 2015.

\bibitem[Christensen(2002)]{christensen2002plane}
Christensen, R.
\newblock \emph{Plane answers to complex questions}, volume~35.
\newblock Springer, 2002.

\bibitem[Courty et~al.(2016)Courty, Flamary, Tuia, and
  Rakotomamonjy]{courty2016optimal}
Courty, N., Flamary, R., Tuia, D., and Rakotomamonjy, A.
\newblock Optimal transport for domain adaptation.
\newblock \emph{IEEE transactions on pattern analysis and machine
  intelligence}, 39\penalty0 (9):\penalty0 1853--1865, 2016.

\bibitem[Cuturi(2013)]{cuturi2013sinkhorn}
Cuturi, M.
\newblock Sinkhorn distances: Lightspeed computation of optimal transport.
\newblock In \emph{Advances in Neural Information Processing Systems}, pp.\
  2292--2300, 2013.

\bibitem[Damodaran et~al.(2018)Damodaran, Kellenberger, Flamary, Tuia, and
  Courty]{damodaran2018deepjdot}
Damodaran, B.~B., Kellenberger, B., Flamary, R., Tuia, D., and Courty, N.
\newblock Deepjdot: Deep joint distribution optimal transport for unsupervised
  domain adaptation.
\newblock In \emph{Proceedings of the European Conference on Computer Vision
  (ECCV)}, pp.\  447--463, 2018.

\bibitem[Davidson et~al.(2018{\natexlab{a}})Davidson, Falorsi, De~Cao, Kipf,
  and Tomczak]{davidson2018hyperspherical}
Davidson, T.~R., Falorsi, L., De~Cao, N., Kipf, T., and Tomczak, J.~M.
\newblock Hyperspherical variational auto-encoders.
\newblock In \emph{34th Conference on Uncertainty in Artificial Intelligence
  2018, UAI 2018}, pp.\  856--865. Association For Uncertainty in Artificial
  Intelligence (AUAI), 2018{\natexlab{a}}.

\bibitem[Davidson et~al.(2018{\natexlab{b}})Davidson, Falorsi, De~Cao, and
  Tomczak]{davidsonhyperspherical}
Davidson, T.~R., Falorsi, L., De~Cao, N., and Tomczak, T. K. J.~M.
\newblock Hyperspherical variational auto-encoders.
\newblock In \emph{Conference on Uncertainty in Artificial Intelligence (UAI)},
  2018{\natexlab{b}}.

\bibitem[Deshpande et~al.(2018)Deshpande, Zhang, and
  Schwing]{deshpande2018generative}
Deshpande, I., Zhang, Z., and Schwing, A.~G.
\newblock Generative modeling using the sliced {W}asserstein distance.
\newblock In \emph{Proceedings of the IEEE Conference on Computer Vision and
  Pattern Recognition}, pp.\  3483--3491, 2018.

\bibitem[Deshpande et~al.(2019)Deshpande, Hu, Sun, Pyrros, Siddiqui, Koyejo,
  Zhao, Forsyth, and Schwing]{deshpande2019max}
Deshpande, I., Hu, Y.-T., Sun, R., Pyrros, A., Siddiqui, N., Koyejo, S., Zhao,
  Z., Forsyth, D., and Schwing, A.~G.
\newblock Max-sliced {W}asserstein distance and its use for {GAN}s.
\newblock In \emph{Proceedings of the IEEE Conference on Computer Vision and
  Pattern Recognition}, pp.\  10648--10656, 2019.

\bibitem[Devlin et~al.(2019)Devlin, Chang, Lee, and
  Toutanova]{devlin-etal-2019-bert}
Devlin, J., Chang, M.-W., Lee, K., and Toutanova, K.
\newblock {BERT}: Pre-training of deep bidirectional transformers for language
  understanding.
\newblock In \emph{Proceedings of the 2019 Conference of the North {A}merican
  Chapter of the Association for Computational Linguistics: Human Language
  Technologies, Volume 1 (Long and Short Papers)}, pp.\  4171--4186,
  Minneapolis, Minnesota, June 2019. Association for Computational Linguistics.
\newblock \doi{10.18653/v1/N19-1423}.
\newblock URL \url{https://aclanthology.org/N19-1423}.

\bibitem[Dosovitskiy et~al.(2020)Dosovitskiy, Beyer, Kolesnikov, Weissenborn,
  Zhai, Unterthiner, Dehghani, Minderer, Heigold, Gelly,
  et~al.]{dosovitskiy2020image}
Dosovitskiy, A., Beyer, L., Kolesnikov, A., Weissenborn, D., Zhai, X.,
  Unterthiner, T., Dehghani, M., Minderer, M., Heigold, G., Gelly, S., et~al.
\newblock An image is worth 16x16 words: Transformers for image recognition at
  scale.
\newblock \emph{arXiv preprint arXiv:2010.11929}, 2020.

\bibitem[Dwivedi \& Bresson(2021)Dwivedi and
  Bresson]{dwivedi2021generalization}
Dwivedi, V.~P. and Bresson, X.
\newblock A generalization of transformer networks to graphs.
\newblock \emph{AAAI Workshop on Deep Learning on Graphs: Methods and
  Applications}, 2021.

\bibitem[Genevay et~al.(2018)Genevay, Peyr{\'e}, and
  Cuturi]{genevay2018learning}
Genevay, A., Peyr{\'e}, G., and Cuturi, M.
\newblock Learning generative models with {S}inkhorn divergences.
\newblock In \emph{International Conference on Artificial Intelligence and
  Statistics}, pp.\  1608--1617. PMLR, 2018.

\bibitem[Guo et~al.(2021)Guo, Cai, Liu, Mu, Martin, and Hu]{guo2021pct}
Guo, M.-H., Cai, J.-X., Liu, Z.-N., Mu, T.-J., Martin, R.~R., and Hu, S.-M.
\newblock Pct: Point cloud transformer.
\newblock \emph{Computational Visual Media}, 7\penalty0 (2):\penalty0
  187–199, Apr 2021.
\newblock ISSN 2096-0662.
\newblock \doi{10.1007/s41095-021-0229-5}.
\newblock URL \url{http://dx.doi.org/10.1007/s41095-021-0229-5}.

\bibitem[Ho et~al.(2017)Ho, Nguyen, Yurochkin, Bui, Huynh, and
  Phung]{ho2017multilevel}
Ho, N., Nguyen, X., Yurochkin, M., Bui, H.~H., Huynh, V., and Phung, D.
\newblock Multilevel clustering via {W}asserstein means.
\newblock In \emph{International Conference on Machine Learning}, pp.\
  1501--1509, 2017.

\bibitem[Huynh et~al.(2020)Huynh, Zhao, and Phung]{huynh2020otlda}
Huynh, V., Zhao, H., and Phung, D.
\newblock Otlda: A geometry-aware optimal transport approach for topic
  modeling.
\newblock \emph{Advances in Neural Information Processing Systems},
  33:\penalty0 18573--18582, 2020.

\bibitem[Jupp \& Mardia(1979)Jupp and Mardia]{jupp1979maximum}
Jupp, P.~E. and Mardia, K.~V.
\newblock Maximum likelihood estimators for the matrix von {M}ises-{F}isher and
  bingham distributions.
\newblock \emph{The Annals of Statistics}, 7\penalty0 (3):\penalty0 599--606,
  1979.

\bibitem[Katharopoulos et~al.(2020)Katharopoulos, Vyas, Pappas, and
  Fleuret]{katharopoulos2020transformers}
Katharopoulos, A., Vyas, A., Pappas, N., and Fleuret, F.
\newblock Transformers are rnns: Fast autoregressive transformers with linear
  attention.
\newblock In \emph{International Conference on Machine Learning}, pp.\
  5156--5165. PMLR, 2020.

\bibitem[Kingma \& Ba(2014)Kingma and Ba]{kingma2014adam}
Kingma, D.~P. and Ba, J.
\newblock Adam: A method for stochastic optimization.
\newblock \emph{arXiv preprint arXiv:1412.6980}, 2014.

\bibitem[Kolouri et~al.(2018)Kolouri, Pope, Martin, and
  Rohde]{kolouri2018sliced}
Kolouri, S., Pope, P.~E., Martin, C.~E., and Rohde, G.~K.
\newblock Sliced {W}asserstein auto-encoders.
\newblock In \emph{International Conference on Learning Representations}, 2018.

\bibitem[Lee et~al.(2019{\natexlab{a}})Lee, Batra, Baig, and
  Ulbricht]{lee2019sliced}
Lee, C.-Y., Batra, T., Baig, M.~H., and Ulbricht, D.
\newblock Sliced {W}asserstein discrepancy for unsupervised domain adaptation.
\newblock In \emph{Proceedings of the IEEE/CVF Conference on Computer Vision
  and Pattern Recognition}, pp.\  10285--10295, 2019{\natexlab{a}}.

\bibitem[Lee et~al.(2019{\natexlab{b}})Lee, Lee, Kim, Kosiorek, Choi, and
  Teh]{lee2019set}
Lee, J., Lee, Y., Kim, J., Kosiorek, A., Choi, S., and Teh, Y.~W.
\newblock Set transformer: A framework for attention-based
  permutation-invariant neural networks.
\newblock In \emph{International conference on machine learning}, pp.\
  3744--3753. PMLR, 2019{\natexlab{b}}.

\bibitem[Lee et~al.(2022)Lee, Kim, Choi, and Park]{lee2022statistical}
Lee, Y., Kim, S., Choi, J., and Park, F.
\newblock A statistical manifold framework for point cloud data.
\newblock In \emph{International Conference on Machine Learning}, pp.\
  12378--12402. PMLR, 2022.

\bibitem[Li et~al.(2020)Li, Su, Duan, and Zheng]{li2020linear}
Li, R., Su, J., Duan, C., and Zheng, S.
\newblock Linear attention mechanism: An efficient attention for semantic
  segmentation.
\newblock \emph{arXiv preprint arXiv:2007.14902}, 2020.

\bibitem[Li et~al.(2019)Li, Raj, Lu, Shen, Kawahara, and
  Kawai]{li2019improving}
Li, S., Raj, D., Lu, X., Shen, P., Kawahara, T., and Kawai, H.
\newblock {Improving Transformer-Based Speech Recognition Systems with
  Compressed Structure and Speech Attributes Augmentation}.
\newblock In \emph{Proc. Interspeech 2019}, pp.\  4400--4404, 2019.
\newblock \doi{10.21437/Interspeech.2019-2112}.
\newblock URL \url{http://dx.doi.org/10.21437/Interspeech.2019-2112}.

\bibitem[Liu et~al.(2019)Liu, Ott, Goyal, Du, Joshi, Chen, Levy, Lewis,
  Zettlemoyer, and Stoyanov]{liu2019roberta}
Liu, Y., Ott, M., Goyal, N., Du, J., Joshi, M., Chen, D., Levy, O., Lewis, M.,
  Zettlemoyer, L., and Stoyanov, V.
\newblock Roberta: A robustly optimized bert pretraining approach.
\newblock \emph{arXiv preprint arXiv:1907.11692}, 2019.

\bibitem[Naderializadeh et~al.(2021)Naderializadeh, Comer, Andrews, Hoffmann,
  and Kolouri]{naderializadeh2021pooling}
Naderializadeh, N., Comer, J., Andrews, R., Hoffmann, H., and Kolouri, S.
\newblock Pooling by sliced-{W}asserstein embedding.
\newblock \emph{Advances in Neural Information Processing Systems}, 34, 2021.

\bibitem[Nadjahi et~al.(2020)Nadjahi, De~Bortoli, Durmus, Badeau, and
  {\c{S}}im{\c{s}}ekli]{nadjahi2020approximate}
Nadjahi, K., De~Bortoli, V., Durmus, A., Badeau, R., and {\c{S}}im{\c{s}}ekli,
  U.
\newblock Approximate {B}ayesian computation with the sliced-{W}asserstein
  distance.
\newblock In \emph{ICASSP 2020-2020 IEEE International Conference on Acoustics,
  Speech and Signal Processing (ICASSP)}, pp.\  5470--5474. IEEE, 2020.

\bibitem[Nguyen \& Ho(2022{\natexlab{a}})Nguyen and Ho]{nguyen2022amortized}
Nguyen, K. and Ho, N.
\newblock Amortized projection optimization for sliced {W}asserstein generative
  models.
\newblock \emph{Advances in Neural Information Processing Systems},
  2022{\natexlab{a}}.

\bibitem[Nguyen \& Ho(2022{\natexlab{b}})Nguyen and Ho]{nguyen2022revisiting}
Nguyen, K. and Ho, N.
\newblock Revisiting sliced {W}asserstein on images: From vectorization to
  convolution.
\newblock \emph{Advances in Neural Information Processing Systems},
  2022{\natexlab{b}}.

\bibitem[Nguyen et~al.(2021{\natexlab{a}})Nguyen, Ho, Pham, and
  Bui]{nguyen2021distributional}
Nguyen, K., Ho, N., Pham, T., and Bui, H.
\newblock Distributional sliced-{W}asserstein and applications to generative
  modeling.
\newblock In \emph{International Conference on Learning Representations},
  2021{\natexlab{a}}.

\bibitem[Nguyen et~al.(2021{\natexlab{b}})Nguyen, Nguyen, Ho, Pham, and
  Bui]{nguyen2021improving}
Nguyen, K., Nguyen, S., Ho, N., Pham, T., and Bui, H.
\newblock Improving relational regularized autoencoders with spherical sliced
  fused {G}romov {W}asserstein.
\newblock In \emph{International Conference on Learning Representations},
  2021{\natexlab{b}}.

\bibitem[Nguyen et~al.(2023)Nguyen, Ren, Nguyen, Rout, Nguyen, and
  Ho]{nguyen2023hierarchical}
Nguyen, K., Ren, T., Nguyen, H., Rout, L., Nguyen, T.~M., and Ho, N.
\newblock Hierarchical sliced wasserstein distance.
\newblock In \emph{The Eleventh International Conference on Learning
  Representations}, 2023.

\bibitem[Nguyen et~al.(2021{\natexlab{c}})Nguyen, Pham, Le, Pham, Ho, and
  Hua]{Nguyen2021PointSetDistances}
Nguyen, T., Pham, Q.-H., Le, T., Pham, T., Ho, N., and Hua, B.-S.
\newblock Point-set distances for learning representations of 3d point clouds.
\newblock In \emph{Proceedings of the IEEE/CVF International Conference on
  Computer Vision (ICCV)}, 2021{\natexlab{c}}.

\bibitem[Nietert et~al.(2022)Nietert, Sadhu, Goldfeld, and
  Kato]{nietert2022statistical}
Nietert, S., Sadhu, R., Goldfeld, Z., and Kato, K.
\newblock Statistical, robustness, and computational guarantees for sliced
  wasserstein distances.
\newblock \emph{Advances in Neural Information Processing Systems}, 2022.

\bibitem[Peyr{\'e} \& Cuturi(2019)Peyr{\'e} and Cuturi]{peyre2019computational}
Peyr{\'e}, G. and Cuturi, M.
\newblock Computational optimal transport: With applications to data science.
\newblock \emph{Foundations and Trends{\textregistered} in Machine Learning},
  11\penalty0 (5-6):\penalty0 355--607, 2019.

\bibitem[Pham et~al.(2020)Pham, Uy, Hua, Nguyen, Roig, and Yeung]{pham2020lcd}
Pham, Q.-H., Uy, M.~A., Hua, B.-S., Nguyen, D.~T., Roig, G., and Yeung, S.-K.
\newblock {LCD}: {L}earned cross-domain descriptors for 2{D}-3{D} matching.
\newblock In \emph{AAAI Conference on Artificial Intelligence}, 2020.

\bibitem[Qi et~al.(2017)Qi, Su, Mo, and Guibas]{qi2017pointnet}
Qi, C.~R., Su, H., Mo, K., and Guibas, L.~J.
\newblock Pointnet: {D}eep learning on point sets for 3d classification and
  segmentation.
\newblock In \emph{Proceedings of the IEEE conference on computer vision and
  pattern recognition}, pp.\  652--660, 2017.

\bibitem[Shen et~al.(2021)Shen, Zhang, Zhao, Yi, and Li]{shen2021efficient}
Shen, Z., Zhang, M., Zhao, H., Yi, S., and Li, H.
\newblock Efficient attention: {A}ttention with linear complexities.
\newblock In \emph{Proceedings of the IEEE/CVF winter conference on
  applications of computer vision}, pp.\  3531--3539, 2021.

\bibitem[Shu(2017)]{ruishu2017}
Shu, R.
\newblock Amortized optimization
  \url{http://ruishu.io/2017/11/07/amortized-optimization/}.
\newblock \emph{Personal Blog}, 2017.

\bibitem[Sra(2016)]{Suvrit_directional}
Sra, S.
\newblock Directional statistics in machine learning: a brief review.
\newblock \emph{arXiv preprint arXiv:1605.00316}, 2016.

\bibitem[Sun et~al.(2019)Sun, Myers, Vondrick, Murphy, and
  Schmid]{sun2019videobert}
Sun, C., Myers, A., Vondrick, C., Murphy, K., and Schmid, C.
\newblock Videobert: A joint model for video and language representation
  learning.
\newblock In \emph{Proceedings of the IEEE/CVF International Conference on
  Computer Vision}, pp.\  7464--7473, 2019.

\bibitem[Temme(2011)]{temme2011special}
Temme, N.~M.
\newblock \emph{Special functions: An introduction to the classical functions
  of mathematical physics}.
\newblock John Wiley \& Sons, 2011.

\bibitem[Tolstikhin et~al.(2018)Tolstikhin, Bousquet, Gelly, and
  Schoelkopf]{tolstikhin2018wasserstein}
Tolstikhin, I., Bousquet, O., Gelly, S., and Schoelkopf, B.
\newblock {W}asserstein auto-encoders.
\newblock In \emph{International Conference on Learning Representations}, 2018.

\bibitem[Vaswani et~al.(2017)Vaswani, Shazeer, Parmar, Uszkoreit, Jones, Gomez,
  Kaiser, and Polosukhin]{vaswani2017attention}
Vaswani, A., Shazeer, N., Parmar, N., Uszkoreit, J., Jones, L., Gomez, A.~N.,
  Kaiser, {\L}., and Polosukhin, I.
\newblock Attention is all you need.
\newblock \emph{Advances in neural information processing systems}, 30, 2017.

\bibitem[Villani(2008)]{villani2008optimal}
Villani, C.
\newblock \emph{Optimal transport: old and new}, volume 338.
\newblock Springer Science \& Business Media, 2008.

\bibitem[Vincent-Cuaz et~al.(2022)Vincent-Cuaz, Flamary, Corneli, Vayer, and
  Courty]{vincent2022template}
Vincent-Cuaz, C., Flamary, R., Corneli, M., Vayer, T., and Courty, N.
\newblock Template based graph neural network with optimal transport distances.
\newblock \emph{Advances in Neural Information Processing Systems}, 2022.

\bibitem[Wang et~al.(2020{\natexlab{a}})Wang, Li, Khabsa, Fang, and
  Ma]{wang2020linformer}
Wang, S., Li, B.~Z., Khabsa, M., Fang, H., and Ma, H.
\newblock Linformer: Self-attention with linear complexity.
\newblock \emph{arXiv preprint arXiv:2006.04768}, 2020{\natexlab{a}}.

\bibitem[Wang et~al.(2020{\natexlab{b}})Wang, Mohamed, Le, Liu, Xiao,
  Mahadeokar, Huang, Tjandra, Zhang, Zhang, et~al.]{wang2020transformer}
Wang, Y., Mohamed, A., Le, D., Liu, C., Xiao, A., Mahadeokar, J., Huang, H.,
  Tjandra, A., Zhang, X., Zhang, F., et~al.
\newblock Transformer-based acoustic modeling for hybrid speech recognition.
\newblock In \emph{ICASSP 2020-2020 IEEE International Conference on Acoustics,
  Speech and Signal Processing (ICASSP)}, pp.\  6874--6878. IEEE,
  2020{\natexlab{b}}.

\bibitem[Wu et~al.(2015)Wu, Song, Khosla, Yu, Zhang, Tang, and Xiao]{wu20153d}
Wu, Z., Song, S., Khosla, A., Yu, F., Zhang, L., Tang, X., and Xiao, J.
\newblock 3d shapenets: {A} deep representation for volumetric shapes.
\newblock In \emph{Proceedings of the IEEE conference on computer vision and
  pattern recognition}, pp.\  1912--1920, 2015.

\bibitem[Yang et~al.(2019{\natexlab{a}})Yang, Huang, Hao, Liu, Belongie, and
  Hariharan]{yang2019pointflow}
Yang, G., Huang, X., Hao, Z., Liu, M.-Y., Belongie, S., and Hariharan, B.
\newblock Pointflow: 3d point cloud generation with continuous normalizing
  flows.
\newblock In \emph{Proceedings of the IEEE/CVF International Conference on
  Computer Vision}, pp.\  4541--4550, 2019{\natexlab{a}}.

\bibitem[Yang et~al.(2019{\natexlab{b}})Yang, Zhang, Ni, Li, Liu, Zhou, and
  Tian]{yang2019modeling}
Yang, J., Zhang, Q., Ni, B., Li, L., Liu, J., Zhou, M., and Tian, Q.
\newblock Modeling point clouds with self-attention and gumbel subset sampling.
\newblock In \emph{Proceedings of the IEEE/CVF conference on computer vision
  and pattern recognition}, pp.\  3323--3332, 2019{\natexlab{b}}.

\bibitem[Yi \& Liu(2021)Yi and Liu]{yi2021sliced}
Yi, M. and Liu, S.
\newblock Sliced {W}asserstein variational inference.
\newblock In \emph{Fourth Symposium on Advances in Approximate Bayesian
  Inference}, 2021.

\bibitem[Zhao et~al.(2021)Zhao, Jiang, Jia, Torr, and Koltun]{zhao2021point}
Zhao, H., Jiang, L., Jia, J., Torr, P.~H., and Koltun, V.
\newblock Point transformer.
\newblock In \emph{Proceedings of the IEEE/CVF International Conference on
  Computer Vision}, pp.\  16259--16268, 2021.

\end{thebibliography}
\bibliographystyle{icml2023}

\newpage
\appendix
\onecolumn
\begin{center}
{\bf{\Large{Supplement to ``Self-Attention Amortized Distributional Projection Optimization for  Sliced Wasserstein Point-Clouds Reconstruction"}}}
\end{center}
In this supplementary, we first provide some additional materials in Appendix~\ref{sec:additional_materials} including  definitions of generalized linear amortized models and non-linear amortized models in Appendix~\ref{subsec:additional_amortized_models}, the detail of computing von Mises-Fisher distributional sliced Wasserstein in Appendix~\ref{subsec:vMF}, and training algorithms for autoencoders in Appendix~\ref{subsec:training_algorithms}. Next, we collect skipped proofs in the main text in Appendix~\ref{sec:proofs}. After that, we discuss the experimental settings of our experiments in Appendix~\ref{sec:exp_settings}. Finally, we present additional experimental results in Appendix~\ref{sec:add_exps}.

\section{Additional Materials}
\label{sec:additional_materials}



\subsection{Amortized models}
\label{subsec:additional_amortized_models}

We now review the generalized linear amortized model and the non-linear amortized model~\cite{nguyen2022amortized}.
\begin{definition}\label{def:glinear_model} 
Given $X,Y \in  \sR^{dm}$,  the \emph{generalized linear amortized model} is defined as:
\begin{align}
    f_\psi (X,Y) := \frac{g_{\psi_1}(X)' w_1 + g_{\psi_1}(Y)'w_2}{||g_{\psi_1}(X)' w_1 + g_{\psi_1}(Y)' w_2||_2^2},
\end{align}
where $X'$ and $Y'$ are matrices of size $d\times m$ that are  reshaped from the concatenated vectors $X$ and $Y$ of size $dm$, $w_1,w_2 \in \sR^{m}$, $w_0 \in \sR^d $,  $\psi_1 \in \Psi_1$, $g_{\psi_1}: \sR^{dm} \to \sR^{dm}$, $\psi =(w_0,w_1,w_2,\psi_1)$, and  $g_{\psi_1}(X)=(x'_1,\ldots,x'_m)$ and $g_{\psi_1}(Y)=(y'_1,\ldots,y'_m)$. To specify, we let $g_{\psi_1}(X) = (W_2\sigma(W_1 x_1)+b_0 ,\ldots, W_2\sigma(W_1 x_m) +b_0) $, where $\sigma(\cdot)$ is the Sigmoid function, $W_1 \in \mathbb{R}^{d\times d}$, $W_2 \in \mathbb{R}^{d\times d}$, and $b_0 \in \mathbb{R}^d$. 
\end{definition}

\begin{definition}\label{def:nonlinear_model} 
Given $X,Y \in  \sR^{dm}$, the \emph{non-linear amortized model} is defined as:
\begin{align}
    f_\psi (X,Y) := \frac{ h_{\psi_2}(X'w_1 + Y'w_2)}{||h_{\psi_2}(X'w_1 + Y'w_2)||_2^2},
\end{align}
where $X'$ and $Y'$ are matrices of size $d\times m$ that are  reshaped from the concatenated vectors $X$ and $Y$ of size $dm$, $w_1,w_2 \in \sR^{m}$, $\psi_2 \in \Psi_2$, $h_{\psi_2}:\sR^d \to \sR^d$, $\psi =(w_1,w_2,\psi_2)$, and $h_{\psi_2}(x) = W_4 \sigma(W_3x)) +b_0 $ where $\sigma(\cdot)$ is the Sigmoid function.
\end{definition}

\subsection{Von Mises-Fisher distributional sliced Wasserstein distance}
\label{subsec:vMF}
We first start with the definition of von Mises Fisher (vMF) distribution. The von Mises–Fisher distribution (\emph{vMF})\cite{jupp1979maximum} is a probability distribution on the unit hypersphere $\mathbb{S}^{d-1}$  with the density function is :
\begin{align}
\label{def:vMF}
    f(x| \epsilon, \kappa ) : = C_d (\kappa) \exp(\kappa \epsilon^\top x),
\end{align}
where $\epsilon\in \mathbb{S}^{d-1}$ is  the location vector, $\kappa \geq 0$ is the concentration parameter,  and $C_d(\kappa) : = \frac{\kappa^{d/2 -1}}{(2 \pi)^{d/2} I_{d/2 -1 }(\kappa) }$ is the normalization constant. Here, $I_v$ is the modified Bessel function of the first kind at order $v$ \cite{temme2011special}. 

 The vMF distribution is a continuous distribution, its mass concentrates around the mean $\epsilon$, and its density decrease when $x$ goes away from $\epsilon$.  When $\kappa \to 0$, vMF converges in distribution to $\mathcal{U}(\mathbb{S}^{d-1})$, and when $\kappa \to \infty$, vMF converges in distribution to the Dirac distribution centered at $\epsilon$~\cite{Suvrit_directional}.

 \textbf{Reparameterized Rejection Sampling: } The sampling process of vMF distribution is based on the rejection sampling procedure. We review the sampling process in Algorithm~\ref{Alg:vMF_sampling}~\cite{davidson2018hyperspherical,nguyen2021improving}. The algorithm performs the reparameterization for the proposal distribution. We now derive the gradient estimator for $\nabla_\epsilon \mathbb{E}_{\text{vMF}(\theta|\epsilon,\kappa)}\big[f(\theta)\big]$ for a general function $f(\theta)$ to find the maxima $\epsilon^*$ in the optimization problem $\max_{\epsilon \in \mathbb{S}^{d-1}} \mathbb{E}_{\text{vMF}(\theta|\epsilon,\kappa)}\big[f(\theta)\big]$.

 In $d>0$  dimension, let $(\epsilon, \kappa)$ be the parameters of vMF distribution. We denotes  $b=\frac{-2 \kappa+\sqrt{4 \kappa^{2}+(d-1)^{2}}}{d-1}$, two conditional distributions:
 $
    g(\omega \mid \kappa)=\frac{2\left(\pi^{d / 2}\right)}{\Gamma(d / 2)} \mathcal{C}_{d}(\kappa) \frac{\exp (\omega \kappa)\left(1-\omega^{2}\right)^{\frac{1}{2}(d-3)}}{\text{Beta}\left(\frac{1}{2}, \frac{1}{2}(d-1)\right)}, \quad 
    r(\omega|\kappa)= \frac{2 b^{1 / 2(d-1)}}{\text{Beta}\left(\frac{1}{2}(d-1), \frac{1}{2}(d-1)\right)} \frac{\left(1-\omega^{2}\right)^{1 / 2(d-3)}}{[(1+b)-(1-b) \omega]^{d-1}}, \nonumber
$
distribution $s(\psi) := \operatorname{Beta}\left(\frac{1}{2}(d-1), \frac{1}{2}(d-1)\right)$, function $h(\psi, \kappa)=\frac{1-(1+b) \psi}{1-(1-b) \psi}$, distributions  $\pi_1(\psi|\kappa)=s(\psi)\frac{g(h(\psi,\kappa)|\kappa)}{r(h(\psi,\kappa)|\kappa)}$, $\pi_2(v):= \mathcal{U}(\mathbb{S}^{d-2})$, and function  
$
    T(\omega,v,\epsilon)= \Big(I - 2\frac{e_1-\epsilon}{||e_1-\epsilon||_2}\frac{e_1-\epsilon}{||e_1-\epsilon||_2}^\top\Big) \big(\omega, \sqrt{1-\omega^2} v^\top\big)^\top:=\theta. \nonumber
$

We can obtain the gradient estimator by the following  Lemma 2 in \cite{davidsonhyperspherical}:
\begin{align*}
    \nabla_\epsilon \mathbb{E}_{\text{vMF}(\theta|\epsilon,\kappa)}\big[f(\theta)\big]= \nabla_\epsilon \mathbb{E}_{(\psi,v ) \sim \pi_1(\psi|\kappa) \pi_2(v)}\Big[f\big(T(h(\psi,\kappa),v,\epsilon)\big)\Big] =  \mathbb{E}_{(\psi,v ) \sim \pi_1(\psi|\kappa) \pi_2(v)}\Big[\nabla_\epsilon f\big(T(h(\psi,\kappa),v,\epsilon)\big)\Big].
\end{align*}

In v-DSW case, we have $f(\theta) = \text{W}_p^p (\theta \sharp \mu,\theta \sharp \nu)$. Therefore, we have:
 \begin{align*}
     \nabla_\epsilon \mathbb{E}_{\text{vMF}(\theta|\epsilon,\kappa)}\big[\text{W}_p^p (\theta \sharp \mu,\theta \sharp \nu)\big]= \mathbb{E}_{(\psi,v ) \sim \pi_1(\psi|\kappa) \pi_2(v)}\Big[ \nabla_\epsilon \text{W}_p^p (f\big(T(h(\psi,\kappa),v,\epsilon) \sharp \mu,f\big(T(h(\psi,\kappa),v,\epsilon) \sharp \nu) \big)\Big].
 \end{align*}

Then we can get a gradient estimator by using Monte-Carlo estimation scheme:
\begin{align*}
    \nabla_\epsilon \mathbb{E}_{\text{vMF}(\theta|\epsilon,\kappa)}\big[\text{W}_p^p (\theta \sharp \mu,\theta \sharp \nu)\big] \approx \frac{1}{L}\sum_{i=1}^L \Big[ \nabla_\epsilon \text{W}_p^p (f\big(T(h(\psi_i,\kappa),v_i,\epsilon) \sharp \mu,f\big(T(h(\psi_i,\kappa),v_i,\epsilon) \sharp \nu) \big)\Big],
\end{align*}
where $\{\psi_i\}_{i=1}^L \sim \pi_{1}(\psi|\kappa)$ i.i.d, $\{v_i\}_{i=1}^L \sim \pi_2(v)$ i.i.d, and $L$ is the number of projections.
Sampling from $\pi_1(\psi|\kappa)$ is equivalent to the acceptance-rejection scheme in vMF sampling procedure, sampling $\pi_2(v)$  is directly  from $\mathcal{U}(\mathbb{S}^{d-2})$. It is worth noting that the gradient estimator for $\nabla_\kappa \mathbb{E}_{\text{vMF}(\theta|\epsilon,\kappa)}\big[f(\theta)\big]$ can be derived by using the log-derivative trick, however, we do not need it here since we do not optimize for $\kappa$ in v-DSW.

 \begin{algorithm}[t]
  \caption{Sampling from vMF distribution}
  \label{Alg:vMF_sampling}
\begin{algorithmic}
  \STATE {\bfseries Input:} location $\epsilon$, concentration $\kappa$, dimension $d$, unit vector $e_1= (1,0,..,0)$
  \STATE Draw $v \sim \mathcal{U}(\mathbb{S}^{d-2})$ 
\STATE $b \leftarrow \frac{-2 \kappa+\sqrt{4 \kappa^{2}+(d-1)^{2}}}{d-1}$, $a \leftarrow \frac{(d-1)+2 \kappa+\sqrt{4 \kappa^{2}+(d-1)^{2}}}{4}$, $m \leftarrow \frac{4 a b}{(1+b)}-(d-1) \log (d-1)$
  \REPEAT 
    \STATE Draw $\psi \sim \operatorname{Beta}\left(\frac{1}{2}(d-1), \frac{1}{2}(d-1)\right)$
    \STATE $\omega \leftarrow h(\psi, \kappa)=\frac{1-(1+b) \psi}{1-(1-b) \psi}$
    \STATE $t \leftarrow \frac{2 a b}{1-(1-b) \psi}$
    \STATE Draw $u \sim \mathcal{U}([0,1])$
  \UNTIL{{$(d-1) \log (t)-t+m \geq \log (u)$}}
 \STATE $h_1\leftarrow (\omega, \sqrt{1-\omega^2} v^\top)^\top$
\STATE $\epsilon^\prime \leftarrow e_1 - \epsilon$
\STATE $u = \frac{\epsilon^\prime}{||\epsilon^\prime||_2}$
\STATE $U = I - 2uu^\top$
  \STATE {\bfseries Output:} $Uh_1$
\end{algorithmic}
\end{algorithm}

\subsection{Training algorithms}
\label{subsec:training_algorithms}

\textbf{Training point-cloud autoencoder with Max-SW:} We present the algorithm of training autoencoder with Max-SW in Algorithm~\ref{alg:trainingMaxSW}. The algorithm contains a nested loop: one is for training the autoencoder, one is for finding the max projecting direction for Max-SW.

\begin{algorithm}[!t]
\caption{Training point-cloud autoencoder  with  max sliced Wasserstein distance}
\begin{algorithmic}
\label{alg:trainingMaxSW}
\STATE \textbf{Input:} Point-cloud distribution $p(X)$,  learning rate $\eta$, slice learning rate $\eta_s$, model maximum number of iterations $\mathcal{T}$,  slice maximum number of iterations $T$, mini-batch size $k$.
  \STATE \textbf{Initialization:} Initialize the encoder $f_\phi$ and the decoder $g_\gamma$
  \WHILE{$\phi,\gamma$ not converge or reach $\mathcal{T}$}
  \STATE Sample a mini-batch $X_1,\ldots,X_k$ i.i.d from $p(X)$
  \STATE $\nabla_\phi = 0,\nabla_\gamma =0$
  \FOR{$i=1$ to $k$}
  \STATE Initialize $\theta$
   \WHILE{$\theta$ not converge or reach $T$}
  \STATE $\theta = \theta +  \eta_s \cdot \nabla_\theta\text{W}_p (\theta \sharp P_{X_i},\theta \sharp P_{g_\gamma(f_\phi(X_i))})$ \# Other update rules can be used
  \STATE $\theta = \frac{\theta}{||\theta||_2}$ \#Project back to the unit-hypersphere $\mathbb{S}^{d-1}$
  \ENDWHILE
  \STATE $\nabla_\phi = \nabla_\phi+\frac{1}{k}  \nabla_\phi \text{W}_p (\theta \sharp P_{X_i},\theta \sharp P_{g_\gamma(f_\phi(X_i))})$
  \STATE $\nabla_\gamma = \nabla_\gamma+\frac{1}{k}  \nabla_\gamma \text{W}_p (\theta \sharp P_{X_i},\theta \sharp P_{g_\gamma(f_\phi(X_i))})$
  \ENDFOR
  \STATE $\phi = \phi - \eta \cdot \nabla_\phi$  \# Other update rules can be used
  \STATE $\gamma = \gamma - \eta \cdot \nabla_\gamma$  \# Other update rules can be used
  \ENDWHILE
 \STATE \textbf{Return:} $\phi,\gamma$
\end{algorithmic}
\end{algorithm}

\textbf{Training point-cloud autoencoder with amortized projection optimization:} We present the training algorithm for  point-cloud autoencoder with amortized projection optimization in Algorithm~\ref{alg:trainingamortizedMaxSW}. With amortized optimization, the inner loop for finding the max projecting direction is removed.

\begin{algorithm}[!t]
\caption{Training point-cloud autoencoder  with amortized projection optimization}
\begin{algorithmic}
\label{alg:trainingamortizedMaxSW}
\STATE \textbf{Input:} Point-cloud distribution $p(X)$,  learning rate $\eta$, slice learning rate $\eta_s$, model maximum number of iterations $\mathcal{T}$,  mini-batch size $k$.
  \STATE \textbf{Initialization:} Initialize the encoder $f_\phi$, the decoder $g_\gamma$, and the amortized model $a_\psi$
  \WHILE{$\phi,\gamma,\psi$ not converge or reach $\mathcal{T}$}
  \STATE Sample a mini-batch $X_1,\ldots,X_k$ i.i.d from $p(X)$
  \STATE $\nabla_\phi = 0,\nabla_\gamma =0,\nabla_\psi = 0$
  \FOR{$i=1$ to $k$}
  \STATE $\theta_{\psi,\gamma,\phi} = a_\psi (X_i,g_\gamma(f_\phi(X_i)))$
   \STATE $\nabla_\psi = \nabla_\psi+\frac{1}{k}  \nabla_\psi \text{W}_p (\theta_{\psi,\gamma,\phi} \sharp P_{X_i},\theta_{\psi,\gamma,\phi} \sharp P_{g_\gamma(f_\phi(X_i))})$
  \STATE $\nabla_\phi = \nabla_\phi+\frac{1}{k}  \nabla_\phi \text{W}_p (\theta_{\psi,\gamma,\phi} \sharp P_{X_i},\theta_{\psi,\gamma,\phi} \sharp P_{g_\gamma(f_\phi(X_i))})$
  \STATE $\nabla_\gamma = \nabla_\gamma+\frac{1}{k}  \nabla_\gamma \text{W}_p (\theta_{\psi,\gamma,\phi} \sharp P_{X_i},\theta_{\psi,\gamma,\phi} \sharp P_{g_\gamma(f_\phi(X_i))})$
  \ENDFOR
  \STATE $\psi = \psi + \eta_s \cdot \nabla_\psi$  \# Other update rules can be used
  \STATE $\phi = \phi - \eta \cdot \nabla_\phi$  \# Other update rules can be used
  \STATE $\gamma = \gamma - \eta \cdot \nabla_\gamma$  \# Other update rules can be used
  \ENDWHILE
 \STATE \textbf{Return:} $\phi,\gamma$
\end{algorithmic}
\end{algorithm}

\textbf{Training point-cloud autoencoder with v-DSW:} We present the algorithm of training autoencoder with v-DSW in Algorithm~\ref{alg:trainingvDSW}. The algorithm contains a nested loop: one is for training the autoencoder, one is for finding the best distribution over projecting directions for v-DSW.

\begin{algorithm}[!t]
\caption{Training point-cloud autoencoder with von-Mises Fisher distributional sliced Wasserstein distance}
\begin{algorithmic}
\label{alg:trainingvDSW}
\STATE \textbf{Input:} Point-cloud distribution $p(X)$,  learning rate $\eta$, slice learning rate $\eta_s$, model maximum number of iterations $\mathcal{T}$,  slice maximum number of iterations $T$, mini-batch size $k$, the number of projections $L$, and the concentration hyperparameter $\kappa$.
  \STATE \textbf{Initialization:} Initialize the encoder $f_\phi$ and the decoder $g_\gamma$
  \WHILE{$\phi,\gamma$ not converge or reach $\mathcal{T}$}
  \STATE Sample a mini-batch $X_1,\ldots,X_k$ i.i.d from $p(X)$
  \STATE $\nabla_\phi = 0,\nabla_\gamma =0$
  \FOR{$i=1$ to $k$}
  \STATE Initialize $\epsilon$
   \WHILE{$\epsilon$ not converge or reach $T$}
\STATE Sample $\theta_1^\epsilon,\ldots,\theta_L^\epsilon $ i.i.d from $\text{vMF}(\epsilon,\kappa)$ via the reparameterized acceptance-rejection sampling in Algorithm~\ref{Alg:vMF_sampling}
  \STATE $\epsilon = \epsilon +  \eta_s \cdot \frac{1}{L}\sum_{l=1}^L\nabla_\epsilon\text{W}_p (\theta_l^\epsilon \sharp P_{X_i},\theta_l^\epsilon \sharp P_{g_\gamma(f_\phi(X_i))})$ \# Other update rules can be used
  \STATE $\epsilon = \frac{\epsilon}{||\epsilon||_2}$ \#Project back to the unit-hypersphere $\mathbb{S}^{d-1}$
  \ENDWHILE
  \STATE Sample $\theta_1^\epsilon,\ldots,\theta_L^\epsilon $ i.i.d from $\text{vMF}(\epsilon,\kappa)$ via the reparameterized acceptance-rejection sampling in Algorithm~\ref{Alg:vMF_sampling}.
  \STATE $\nabla_\phi = \nabla_\phi+\frac{1}{k}  \frac{1}{L} \sum_{i=l}^L \nabla_\phi \text{W}_p (\theta_l^\epsilon \sharp P_{X_i},\theta_l^\epsilon \sharp P_{g_\gamma(f_\phi(X_i))})$
  \STATE $\nabla_\gamma = \nabla_\gamma+\frac{1}{k}\frac{1}{L} \sum_{i=l}^L  \nabla_\gamma \text{W}_p (\theta_l^\epsilon \sharp P_{X_i},\theta_l^\epsilon \sharp P_{g_\gamma(f_\phi(X_i))})$
  \ENDFOR
  \STATE $\phi = \phi - \eta \cdot \nabla_\phi$  \# Other update rules can be used
  \STATE $\gamma = \gamma - \eta \cdot \nabla_\gamma$  \# Other update rules can be used
  \ENDWHILE
 \STATE \textbf{Return:} $\phi,\gamma$
\end{algorithmic}
\end{algorithm}

\textbf{Training point-cloud autoencoder with amortized distributonal projection optimization:} We present the training algorithm for  point-cloud autoencoder with amortized distributional projection optimization in Algorithm~\ref{alg:trainingamortizedvDSW}. With amortized 
 distributional optimization, the inner loop for finding the best distribution over projecting directions is removed.

\begin{algorithm}[!t]
\caption{Training point-cloud autoencoder  with amortized projection optimization}
\begin{algorithmic}
\label{alg:trainingamortizedvDSW}
\STATE \textbf{Input:} Point-cloud distribution $p(X)$,  learning rate $\eta$, slice learning rate $\eta_s$, model maximum number of iterations $\mathcal{T}$,  mini-batch size $k$.
  \STATE \textbf{Initialization:} Initialize the encoder $f_\phi$, the decoder $g_\gamma$, and the amortized model $a_\psi$
  \WHILE{$\phi,\gamma,\psi$ not converge or reach $\mathcal{T}$}
  \STATE Sample a mini-batch $X_1,\ldots,X_k$ i.i.d from $p(X)$
  \STATE $\nabla_\phi = 0,\nabla_\gamma =0,\nabla_\psi = 0$
  \FOR{$i=1$ to $k$} 
  \STATE $\epsilon_{\psi,\gamma,\phi} = a_\psi (X_i,g_\gamma(f_\phi(X_i)))$ 
  \STATE Sample $\theta_1^{\psi,\gamma,\phi},\ldots,\theta_L^{\psi,\gamma,\phi} $ i.i.d from $\text{vMF}(\epsilon_{\psi,\gamma,\phi},\kappa)$ via the reparameterized acceptance-rejection sampling in Algorithm~\ref{Alg:vMF_sampling}
   \STATE $\nabla_\psi = \nabla_\psi+\frac{1}{k} \frac{1}{L} \sum_{i=l}^L  \nabla_\psi \text{W}_p (\theta_l^{\psi,\gamma,\phi} \sharp P_{X_i},\theta_l^{\psi,\gamma,\phi} \sharp P_{g_\gamma(f_\phi(X_i))})$
  \STATE $\nabla_\phi = \nabla_\phi+\frac{1}{k}\frac{1}{L} \sum_{i=l}^L  \nabla_\phi \text{W}_p (\theta_l^{\psi,\gamma,\phi} \sharp P_{X_i},\theta_l^{\psi,\gamma,\phi} \sharp P_{g_\gamma(f_\phi(X_i))})$
  \STATE $\nabla_\gamma = \nabla_\gamma+\frac{1}{k} \frac{1}{L} \sum_{i=l}^L \nabla_\gamma \text{W}_p (\theta_l^{\psi,\gamma,\phi} \sharp P_{X_i},\theta_l^{\psi,\gamma,\phi} \sharp P_{g_\gamma(f_\phi(X_i))})$
  \ENDFOR
  \STATE $\psi = \psi + \eta_s \cdot \nabla_\psi$  \# Other update rules can be used
  \STATE $\phi = \phi - \eta \cdot \nabla_\phi$  \# Other update rules can be used
  \STATE $\gamma = \gamma - \eta \cdot \nabla_\gamma$  \# Other update rules can be used
  \ENDWHILE
 \STATE \textbf{Return:} $\phi,\gamma$
\end{algorithmic}
\end{algorithm}

\section{Proofs}
\label{sec:proofs}

\subsection{Proof for Proposition~\ref{prop:pseudo_metricity}}
\label{subsec:proof:pseudo_metricity}
We first recall the definition of the projected one-dimensional Wasserstein between two probability measures $\mu$ and $\nu$: $\text{PW}_p(\mu,\nu;\hat{\theta}) = \text{W}_p(\hat{\theta}\sharp \mu,\hat{\theta}\sharp \nu)$ for $\hat{\theta} \neq \text{argmax}_{\theta \in \mathbb{S}^{d-1}} \text{W}_p(\theta \sharp \mu,\theta \sharp \nu)$.

\textbf{Non-negativity and Symmetry:} Due to the non-negativity and symmetry of the Wasserstein distance, the non-negativity and symmetry of the projected Wasserstein follow directly from its definition.

\textbf{Triangle inequality: } For any three probability measures $\mu_1,\mu_2,\mu_3 \in \mathcal{P}_p(\mathbb{R}^d)$, we have:
\begin{align*}
    \text{PW}_p(\mu_1,\mu_3;\hat{\theta}) &=  \text{W}_p (\hat{\theta}\sharp \mu_1,\hat{\theta}\sharp \mu_3 ) \\
    &\leq  \text{W}_p (\hat{\theta}\sharp \mu_1,\hat{\theta}\sharp \mu_2 )+ \text{W}_p (\hat{\theta}\sharp \mu_2,\hat{\theta}\sharp \mu_3 )\\
    &=  \text{PW}_p(\mu_1,\mu_2;\hat{\theta}) + \text{PW}_p(\mu_2,\mu_3;\hat{\theta}),
\end{align*}
where the first inequality is due to the triangle inequality of the Wasserstein distance.

\textbf{Identity: } If $\mu =\nu$, we have $\text{PW}_p(\mu,\nu;\hat{\theta}) =0$ due to the identity of the Wasserstein distance. However, if $\text{PW}_p(\mu,\nu;\hat{\theta}) =0$, there exists $\theta' \in \mathbb{S}^{d-1}$ such that $0=\text{PW}_p(\mu,\nu;\hat{\theta}) < \text{PW}_p(\mu,\nu;\theta') $. Let $\mathcal{F}[\gamma](w) = \int_{\mathbb{R}^{d'}} e^{-i \langle w,x\rangle} d \gamma(x)$ be the Fourier transform of $\gamma \in \mathcal{P}(\mathbb{R}^{d'})$, for any $t\in \mathbb{R}$, we have 
\begin{align*}
       \mathcal{F}[\mu](t\theta') &= \int_{\mathbb{R}^{d}} e^{-it\langle \theta',x \rangle} d\mu(x)=  \int_{\mathbb{R}}e^{-itz} d \theta' \sharp \mu(z) = \mathcal{F}[\theta' \sharp \mu](t) \\
       &\neq \mathcal{F}[\theta' \sharp \nu](t) =\int_{\mathbb{R}}e^{-itz} d \theta' \sharp \nu(z) =\int_{\mathbb{R}^{d}} e^{-it\langle \theta',x \rangle} d\nu(x)=\mathcal{F}[\nu](t\theta').
    \end{align*}
Therefore, we have $\mu \neq \nu$. We complete the proof.
\subsection{Proof for Theorem~\ref{theo:reconstruction}}
\label{subsec:proof:reconstruction}

We first start with proving the metricity of the \textit{non-optimal} von Mises Fisher distributional sliced Wasserstein distance (v-DSW). For any two probability measures $\mu,\nu \in \mathcal{P}_p(\mathbb{R}^{d})$, the \textit{non-optimal} von Mises Fisher distributional sliced Wasserstein distance (v-DSW) is defined as follow:
\begin{align*}
    \text{v-DSW}_p (\mu,\nu;\epsilon,\kappa)= \left(\mathbb{E}_{\theta \sim \text{vMF}(\epsilon,\kappa)} \text{W}_p^p (\theta \sharp \mu,\theta \sharp \nu)\right)^{\frac{1}{p}},
\end{align*}
where $\epsilon \in \mathbb{S}^{d-1}$ and  $0 <\kappa <\infty$.
\begin{lemma}
\label{lemma:vDSW}
    For any $\epsilon \in \mathbb{S}^{d-1}$ and $\kappa < \infty$, $\text{v-DSW}_p(\cdot,\cdot;\epsilon,\kappa)$ is a valid metric on the space of probability measures.
\end{lemma}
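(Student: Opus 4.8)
The plan is to verify the metric axioms for $\text{v-DSW}_p(\cdot,\cdot;\epsilon,\kappa)$ by transferring the corresponding properties of $\text{W}_p$ through the expectation, the only delicate point being identity of indiscernibles. Finiteness is immediate: since $x\mapsto\theta^\top x$ is $1$-Lipschitz we have $\text{W}_p(\theta\sharp\mu,\theta\sharp\nu)\le\text{W}_p(\mu,\nu)<\infty$ for every $\theta\in\mathbb{S}^{d-1}$, so the expectation under $\text{vMF}(\epsilon,\kappa)$ is finite. Non-negativity and symmetry follow directly from the same properties of $\text{W}_p$ and the definition. For the triangle inequality, fix $\mu_1,\mu_2,\mu_3\in\mathcal{P}_p(\mathbb{R}^d)$; applying the triangle inequality of $\text{W}_p$ pointwise in $\theta$ and then Minkowski's inequality in $L^p(\text{vMF}(\epsilon,\kappa))$ gives
\begin{align*}
\text{v-DSW}_p(\mu_1,\mu_3;\epsilon,\kappa) &= \left(\mathbb{E}_{\theta\sim\text{vMF}(\epsilon,\kappa)}\text{W}_p^p(\theta\sharp\mu_1,\theta\sharp\mu_3)\right)^{1/p} \\
&\le \left(\mathbb{E}_{\theta}\big(\text{W}_p(\theta\sharp\mu_1,\theta\sharp\mu_2)+\text{W}_p(\theta\sharp\mu_2,\theta\sharp\mu_3)\big)^p\right)^{1/p} \\
&\le \text{v-DSW}_p(\mu_1,\mu_2;\epsilon,\kappa)+\text{v-DSW}_p(\mu_2,\mu_3;\epsilon,\kappa).
\end{align*}

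The substantive part is identity of indiscernibles. One direction is trivial: $\mu=\nu$ forces $\theta\sharp\mu=\theta\sharp\nu$ for every $\theta$, hence $\text{v-DSW}_p(\mu,\nu;\epsilon,\kappa)=0$. For the converse, suppose $\mathbb{E}_{\theta\sim\text{vMF}(\epsilon,\kappa)}\text{W}_p^p(\theta\sharp\mu,\theta\sharp\nu)=0$. Here the hypothesis $\kappa<\infty$ is essential: the $\text{vMF}(\epsilon,\kappa)$ density $x\mapsto C_d(\kappa)\exp(\kappa\epsilon^\top x)$ is bounded above and below by strictly positive constants on the compact set $\mathbb{S}^{d-1}$, so it has the same null sets as the uniform measure $\mathcal{U}(\mathbb{S}^{d-1})$. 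Since a non-negative integrand with zero expectation vanishes almost everywhere, $\text{W}_p(\theta\sharp\mu,\theta\sharp\nu)=0$, i.e.\ $\theta\sharp\mu=\theta\sharp\nu$, for all $\theta$ in a set $A\subseteq\mathbb{S}^{d-1}$ of full $\mathcal{U}(\mathbb{S}^{d-1})$-measure. I would then upgrade this to $\mu=\nu$ exactly as in the proof of Proposition~\ref{prop:pseudo_metricity}: for $\theta\in A$ and all $t\in\mathbb{R}$, $\mathcal{F}[\mu](t\theta)=\mathcal{F}[\theta\sharp\mu](t)=\mathcal{F}[\theta\sharp\nu](t)=\mathcal{F}[\nu](t\theta)$, so the continuous function $g:=\mathcal{F}[\mu]-\mathcal{F}[\nu]$ vanishes on $\mathbb{R}\cdot A=\{t\theta:t\in\mathbb{R},\ \theta\in A\}$. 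Writing Lebesgue measure on $\mathbb{R}^d$ in polar coordinates shows $\mathbb{R}\cdot A$ has full Lebesgue measure, hence is dense, so $g\equiv 0$ by continuity; thus $\mathcal{F}[\mu]=\mathcal{F}[\nu]$ and $\mu=\nu$ by injectivity of the Fourier transform on $\mathcal{P}_p(\mathbb{R}^d)$.

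The main obstacle, and essentially the only step requiring care, is this density/Fourier argument: one must notice that $\kappa<\infty$ keeps the $\text{vMF}$ density non-degenerate so that ``$\text{vMF}$-almost every $\theta$'' can be replaced by ``$\mathcal{U}(\mathbb{S}^{d-1})$-almost every $\theta$'', and then carry out the measure-theoretic bookkeeping ($\mathbb{R}\cdot A$ has full Lebesgue measure, hence is dense) needed to pass from almost-everywhere agreement of characteristic functions to everywhere agreement via continuity. All remaining axioms are a routine transfer from $\text{W}_p$ using Minkowski's inequality.
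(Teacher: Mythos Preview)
Your proof is correct and follows the same route as the paper: non-negativity, symmetry, and the triangle inequality are transferred from $\text{W}_p$ via Minkowski, and identity of indiscernibles is obtained from the Fourier transform using that $\text{vMF}(\epsilon,\kappa)$ has full support on $\mathbb{S}^{d-1}$ when $\kappa<\infty$. Your treatment of the last step is in fact more careful than the paper's, which jumps directly from ``$\theta\sharp\mu=\theta\sharp\nu$ for $\text{vMF}$-a.s.\ $\theta$'' to ``$\mathcal{F}[\mu](t\theta)=\mathcal{F}[\nu](t\theta)$ for all $\theta$'' without spelling out the density/continuity bookkeeping you provide.
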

\begin{proof}
We now prove that v-DSW satisfies non-negativity, symmetry, triangle inequality, and identity.

\textbf{Non-negativity and Symmetry:}  The non-negativity and symmetry of v-DSW follow directly the non-negativity and symmetry of the Wasserstein distance.

\textbf{Triangle inequality:} For any three probability measures $\mu_1,\mu_2,\mu_3 \in \mathcal{P}_p(\mathbb{R}^d)$, we have
\begin{align*}
     \text{v-DSW}_p (\mu_1,\mu_3;\epsilon,\kappa) &=  \left(\mathbb{E}_{\theta \sim \text{vMF}(\epsilon,\kappa)} \text{W}_p^p (\theta \sharp \mu_1,\theta \sharp \mu_3)\right)^{\frac{1}{p}} \\
     &\leq  \left(\mathbb{E}_{\theta \sim \text{vMF}(\epsilon,\kappa)} \left[\text{W}_p (\theta \sharp \mu_1,\theta \sharp \mu_2)+\text{W}_p (\theta \sharp \mu_2,\theta \sharp \mu_3) \right]^p\right)^{\frac{1}{p}} \\
     &\leq \left(\mathbb{E}_{\theta \sim \text{vMF}(\epsilon,\kappa)} \text{W}_p^p (\theta \sharp \mu_1,\theta \sharp \mu_2) \right)^{\frac{1}{p}}+ \left( \mathbb{E}_{\theta \sim \text{vMF}(\epsilon,\kappa)} \text{W}_p^p (\theta \sharp \mu_2,\theta \sharp \mu_3) \right)^{\frac{1}{p}} \\
     &=  \text{v-DSW}_p (\mu_1,\mu_2;\epsilon,\kappa) + \text{v-DSW}_p (\mu_2,\mu_3;\epsilon,\kappa) 
\end{align*}

\textbf{Identity:} From the definition, if $\mu=\nu$, we obtain $\text{v-DSW}_p (\mu,\nu;\epsilon,\kappa) = 0$. Now, we need to show that if $\text{v-DSW}_p 
(\mu,\nu;\epsilon,\kappa) = 0$, then $\mu=\nu$.

If $\text{v-DSW}_p (\mu,\nu;\epsilon,\kappa) = 0$, we have $\left(\mathbb{E}_{\theta \sim \text{vMF}(\epsilon,\kappa)} \text{W}_p^p (\theta \sharp \mu,\theta \sharp \nu)\right)^{\frac{1}{p}}=0$ which implies $\mathbb{E}_{\theta \sim \text{vMF}(\epsilon,\kappa)} \text{W}_p^p (\theta \sharp \mu,\theta \sharp \nu)=0$. Therefore, $\text{W}_p (\theta \sharp \mu,\theta \sharp \nu)=0$ for $\text{vMF}(\epsilon,\kappa)$ almost surely $\theta \in \mathbb{S}^{d-1}$. Using the identity property of the Wasserstein distance, we obtain $\theta \sharp \mu = \theta \sharp \nu$ for $\text{vMF}(\epsilon,\kappa)$ almost surely $\theta \in \mathbb{S}^{d-1}$. Since $\text{vMF}(\epsilon,\kappa)$ with $0<\kappa<\infty$ has the supports on all $\mathbb{S}^{d-1}$, for any $t \in \mathbb{R}$ and $\theta \in \mathbb{S}^{d-1}$, we have:
\begin{align*}
       \mathcal{F}[\mu](t\theta) &= \int_{\mathbb{R}^{d}} e^{-it\langle \theta,x \rangle} d\mu(x)=  \int_{\mathbb{R}}e^{-itz} d \theta \sharp \mu(z) = \mathcal{F}[\theta \sharp \mu](t) \\
       &=\mathcal{F}[\theta \sharp \nu](t) =\int_{\mathbb{R}}e^{-itz} d \theta \sharp \nu(z) =\int_{\mathbb{R}^{d}} e^{-it\langle \theta,x \rangle} d\nu(x)=\mathcal{F}[\nu](t\theta), 
    \end{align*}
where $\mathcal{F}[\gamma](w) = \int_{\mathbb{R}^{d'}} e^{-i \langle w,x\rangle} d \gamma(x)$ denotes the Fourier transform of $\gamma \in \mathcal{P}(\mathbb{R}^{d'})$.  We then obtain $\mu = \nu$ by the injectivity of the Fourier transform.  We complete the proof.
\end{proof}

By abuse of notation, we denote $\text{v-DSW}(X,Y;\epsilon,\kappa) =  \text{v-DSW}(P_X,P_Y;\epsilon,\kappa)$ for $X,Y \in \mathcal{X}$ are two point-clouds, $P_X =  \frac{1}{m} \sum_{i=1}^m \delta_{x_i}$, and $P_Y =  \frac{1}{m} \sum_{i=1}^m \delta_{y_i}$. We cast the v-DSW from a metric on the space of probability measures to the space of point-clouds $\mathcal{X}$.
\begin{corollary}
\label{coroll:metric}
    For any $\epsilon \in \mathbb{S}^{d-1}$ and $\kappa < \infty$, $\text{v-DSW}_p(\cdot,;\epsilon,\kappa)$ is a valid metric on the space of point-clouds $\mathcal{X}$.
\end{corollary}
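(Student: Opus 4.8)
The plan is to lift the metric properties already established for probability measures in Lemma~\ref{lemma:vDSW} (and its role in Theorem~\ref{theo:reconstruction}) to the space of point-clouds through the identification $X \mapsto P_X$. First I would observe that every point-cloud $X = (x_1,\dots,x_m) \in \mathcal{X}$ induces a finitely supported empirical measure $P_X = \frac{1}{m}\sum_{i=1}^m \delta_{x_i}$, which trivially has finite $p$-moments and hence lies in $\mathcal{P}_p(\mathbb{R}^d)$. Since, by the convention introduced just before the corollary, $\text{v-DSW}_p(X,Y;\epsilon,\kappa)$ is \emph{defined} to equal $\text{v-DSW}_p(P_X,P_Y;\epsilon,\kappa)$, the non-negativity, symmetry, and triangle inequality for point-clouds follow immediately by applying the corresponding conclusions of Lemma~\ref{lemma:vDSW} to $P_X,P_Y,P_Z \in \mathcal{P}_p(\mathbb{R}^d)$.

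The only axiom requiring care is the identity property in the permutation-invariant sense demanded by the point-cloud metric space $(\mathcal{X},\mathcal{D})$ of Section~\ref{subsec:pointcloud_reconstruction}. On one side, for any permutation $\sigma:[m]\to[m]$ we have $P_{\sigma(X)} = P_X$, so the ``$\mu=\nu \Rightarrow \text{v-DSW}_p = 0$'' direction of Lemma~\ref{lemma:vDSW} gives $\text{v-DSW}_p(X,\sigma(X);\epsilon,\kappa) = \text{v-DSW}_p(P_X,P_X;\epsilon,\kappa) = 0$. On the other side, if $\text{v-DSW}_p(X,Y;\epsilon,\kappa) = 0$, the identity direction of Lemma~\ref{lemma:vDSW} yields $P_X = P_Y$; equating these two uniform empirical measures carrying $m$ atoms of mass $1/m$ each forces the multiset $\{x_1,\dots,x_m\}$ to coincide with $\{y_1,\dots,y_m\}$, i.e.\ $Y = \sigma(X)$ for some permutation $\sigma$, which is precisely the notion of equality in $(\mathcal{X},\mathcal{D})$. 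Hence all four axioms hold.

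I do not anticipate any real obstacle: the argument is essentially a bookkeeping exercise transporting Lemma~\ref{lemma:vDSW} through the map $X\mapsto P_X$. The one point worth stating explicitly, and the only place a reader might stumble, is that the ``identity of indiscernibles'' axiom must be read modulo permutations—two distinct vectors $X,Y\in\mathbb{R}^{dm}$ can represent the same point-cloud—so that $P_X = P_Y$ is equivalent to $X$ and $Y$ being permutations of one another; with that interpretation fixed, the corollary follows.
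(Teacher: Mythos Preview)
Your proposal is correct and follows essentially the same route as the paper: both lift the metric axioms from Lemma~\ref{lemma:vDSW} through the map $X\mapsto P_X$ and then use $P_{\sigma(X)}=P_X$ to handle permutation invariance. If anything, you are slightly more careful than the paper in spelling out that the identity-of-indiscernibles axiom must be read modulo permutations (arguing that $P_X=P_Y$ forces $Y=\sigma(X)$), whereas the paper simply asserts that identity follows from Lemma~\ref{lemma:vDSW} and treats permutation invariance as a separate check.
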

\begin{proof}
    Since $P_X,P_Y \in \mathcal{P}_p(\mathbb{R}^d)$, the non-negativity, symmetry, triangle inequality, and identity properties follow directly from Lemma~\ref{lemma:vDSW}. We now only need to show that v-DSW is invariant to permutation. This property is straightforward from the definition of empirical probability measures. For any permutation function $\sigma$, we have $P_X  =  \frac{1}{m} \sum_{i=1}^m \delta_{x_i} =  \frac{1}{m} \sum_{i=1}^m \delta_{x_\sigma(i)} =P_{\sigma(X)}$ which completes the proof.
\end{proof}

We now continue the proof of Theorem~\ref{theo:reconstruction}. If $\mathbb{E}_{X \sim p(X)} \left(\mathbb{E}_{\theta \sim \text{vMF}(\epsilon,\kappa)}  \text{W}_p^p(\theta \sharp P_X,\theta \sharp P_{g_\gamma (f_\phi(X))}) \right)^{\frac{1}{p}}= 0$, we obtain $\left(\mathbb{E}_{\theta \sim \text{vMF}(\epsilon,\kappa)}  \text{W}_p^p(\theta \sharp P_X,\theta \sharp P_{g_\gamma (f_\phi(X))}) \right)^{\frac{1}{p}}=\text{v-DSW}(X,g_\gamma(f_\phi(X));\epsilon,\kappa)= 0$ for p-almost surely $X \in \mathcal{X}$. By Collorary~\ref{coroll:metric}, we obtain $X=g_\gamma(f_\phi(X))$  for p-almost surely $X \in \mathcal{X}$. We complete the proof.

\subsection{Proof for Proposition~\ref{prop:invariance}}
\label{subsec:proof:invariance}

We first recall the definition of the self-attention amortized model in Definition~\ref{def:efficient_attention_model}: 

\begin{align*}
    a_\psi (X,Y)=\frac{\gA_{\zeta}(X'^\top)^\top \vone_{m} +  \gA_{\zeta}(Y'^\top)^\top \vone_{m}}{||\gA_{\zeta}(X'^\top)^\top  \vone_{m} +  \gA_{\zeta}(Y'^\top)^\top \vone_{m}||_2},
\end{align*}

\textbf{Symmetry:} Since the self-attention amortized model use the same attention weight $\zeta$ for both $X$ and $Y$, exchanging $X$ and $Y$ yields the same results $a_\psi(X,Y) =  a_\psi(Y,X)$.

\textbf{Permutation invariance:} Based on the results in~\citet[Appendix A]{yang2019modeling}, we show that self-attention amortized model is permutation invariant. In particular, we have:
\begin{align*}
    \gA_{\zeta}(X'^\top)^\top \vone_{m}  &= \attention(X'^\top W_q, X'^\top W_k, X'^\top  W_v)^\top \vone_{m}  \\
    &= \left(\text{softmax}_{\text{row}} \left[ \frac{X'^\top  W_q W_k^\top X'}{\sqrt{d_k}}\right] X'^\top  W_v\right)^\top \vone_{m}  \\
    &=  \left(\text{softmax}_{\text{row}} \left[ \frac{\sigma(X)'^\top  W_q W_k^\top \sigma(X)'}{\sqrt{d_k}}\right] \sigma(X)'^\top  W_v\right)^\top \vone_{m}   \\
    &= \gA_{\zeta}(\sigma(X)'^\top)^\top \vone_{m} .
\end{align*}
Similarly, the proof holds for both linear self-attention and efficient self-attention.

\section{Experiment settings}
\label{sec:exp_settings}
In this section, we first provide the details of the training process and the architecture for point-cloud reconstruction, transfer learning, and point-cloud generation. Then, we present the implementation detail and hyper-parameters settings for different distances used in our experiments.

\subsection{Details of point-cloud reconstruction and downstream applications}
\label{subsec:ae_details}
\textbf{Point-cloud reconstruction:} We use the same settings in ASW~\cite{Nguyen2021PointSetDistances} to train autoencoders. We utilize a variant of Point-Net~\cite{qi2017pointnet} with an embedding size of 256 proposed in~\cite{pham2020lcd}. The architecture of the autoencoder and classifier are shown in Figure~\ref{fig:architecture}. Our autoencoder is trained on the ShapeNet Core-55 dataset~\cite{shapenet2015} with a batch size of 128 and a point-cloud size of 2048. We train it for 300 epochs using an SGD optimizer with an initial learning rate of 1e-3, a momentum of 0.9, and a weight decay of 5e-4. 

\begin{figure}[t]
\begin{center}
\begin{tabular}{c}
\widgraph{0.6\textwidth}{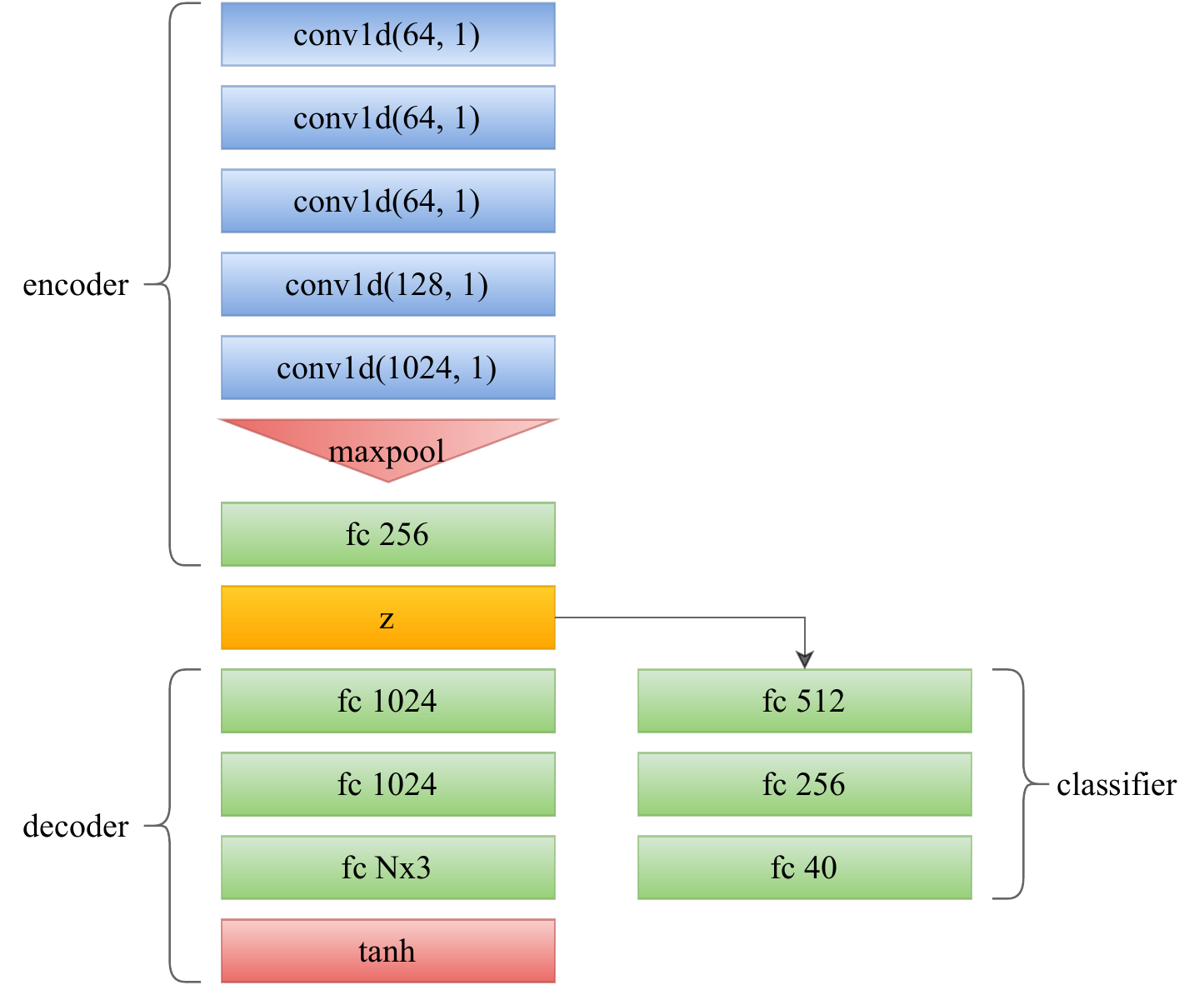}
\end{tabular}
\end{center}
\vskip -0.2in
\caption{
\footnotesize{The architecture of the Point-Net variant in our experiments. For transfer learning, we use a simple classifier with 3 fully-connected layers. All layers are followed by ReLU activation and batch normalization by default, except for the final layers.}} 
\label{fig:architecture}
\vskip -0.2in
\end{figure}

Next, we detail the process of conducting two downstream applications of point-cloud reconstruction.

\textbf{Transfer learning:} A classifier is trained on the latent space of the autoencoder. Particularly, we extract a 256-dimension (which is smaller than the setting in~\cite{lee2022statistical}) latent vector of an input 3D point-cloud via the pre-trained encoder. Then, this vector is fed into a multi-layer perceptron with hidden layers of size 512 and 256. The last layer outputs a 40-dimension vector representing the prediction of 40 classes of the ModelNet40 dataset.

\textbf{Point-cloud generation:} Our generative model is trained on the latent space of the autoencoder as follows. First, we extract a 256-dimension latent vector of an input 3D point-cloud via the pre-trained encoder. Then a 64-dimensional vector is drawn from a normal distribution $\gN(0, \sI_{64})$, where $\sI_{64}$ is the $64 x 64$ identity matrix, and fed into a generator which also outputs a 256-dimension vector. Finally, the generator learns by minimizing the optimal transport distance between the generated and ground truth latent codes.

\subsection{Details of baseline distances}
\label{subsec:baseline_details}
We want to emphasize that we use the same set of hyper-parameters reported in~\cite{Nguyen2021PointSetDistances} for Chamfer, EMD, SW, and Max-SW.

\textbf{Chamfer and EMD:} We use the CUDA implementation from~\cite{yang2019pointflow}.

\textbf{SW:} We use the Monte Carlo estimation with 100 slices.

\textbf{Max-SW:} We use the projected sub-gradient ascent algorithm to optimize the projection. It is trained with an Adam optimizer with an initial learning rate of 1e-4. The number of iterations T is chosen from $\{1, 10, 50\}$.

\textbf{Adaptive SW:} We use Algorithm 1 in~\cite{Nguyen2021PointSetDistances} with the same set of parameters as follows: $N_0 = 2, s = 1, \epsilon = 0.5, $ and $M = 500$.
\begin{table}[t]
    \caption{Reconstruction and transfer learning performance of different autoencoders on the ModelNet40 dataset. For v-DSW and Max-SW, T denotes the number of projected sub-gradient ascent iterations. In Table~\ref{table:short_reconstruction_result}, both v-DSW and Max-SW have T = 50 iterations. All reconstructed losses except EMD are multiplied by 100.}
    \vskip 0.1in
    \label{table:full_reconstruction_result}
    \centering
    \scalebox{0.9}{
        \begin{tabular}{lccccc}
        \toprule
        Method & CD $(10^{-2}, \downarrow)$ & SW $(10^{-2}, \downarrow)$ & EMD $(\downarrow)$ & Acc $(\uparrow)$ & Time $(\downarrow)$ \\
        \midrule
        CD & 1.25 $\pm$ 0.03 & 681.20 $\pm$ 16.73 & 653.52 $\pm$ 10.43 & 86.28 $\pm$ 0.34 & 95 \\
        EMD & 0.40 $\pm$ 0.00 & 94.54 $\pm$ 2.90 & 168.60 $\pm$ 1.57 & 88.45 $\pm$ 0.20 & 208 \\
        \midrule
        SW & 0.68 $\pm$ 0.01 & 89.61 $\pm$ 3.88 & 191.12 $\pm$ 2.88 & 87.90 $\pm$ 0.27 & 106 \\
        Max-SW (T = 1) & 0.69 $\pm$ 0.01 & 87.60 $\pm$ 0.95 & 190.88 $\pm$ 0.40 & 88.05 $\pm$ 0.23 & 97 \\
        Max-SW (T = 10) & 0.69 $\pm$ 0.01 & 90.72 $\pm$ 0.58 & 192.82 $\pm$ 0.73 & 87.82 $\pm$ 0.37 & 102 \\
        Max-SW (T = 50) & 0.68 $\pm$ 0.01 & 88.22 $\pm$ 1.45 & 190.23 $\pm$ 0.1 & 87.97 $\pm$ 0.14 & 116 \\
        ASW & 0.69 $\pm$ 0.01 & 89.42 $\pm$ 5.07 & 192.03 $\pm$ 3.09 & 87.78 $\pm$ 0.20 & 103 \\
        v-DSW (T = 1) & \textbf{0.67 $\pm$ 0.01} & 87.29 $\pm$ 1.49 & 188.52 $\pm$ 1.47 & 87.87 $\pm$ 0.28 & 115 \\
        v-DSW (T = 10) & 0.68 $\pm$ 0.00 & 87.44 $\pm$ 1.07 & 189.97 $\pm$ 1.04 & 87.98 $\pm$ 0.23 & 205 \\
        v-DSW (T = 50) & \textbf{0.67 $\pm$ 0.00} & 85.03 $\pm$ 3.31 & 187.75 $\pm$ 2.00 & 87.83 $\pm$ 0.40 & 633 \\
        $\gL$-Max-SW & 1.06 $\pm$ 0.03 & 121.85 $\pm$ 5.77 & 236.87 $\pm$ 3.42 & 87.70 $\pm$ 0.23 & \textbf{94} \\
        $\gG$-Max-SW & 12.11 $\pm$ 0.29 & 851.07 $\pm$ 2.11 & 829.28 $\pm$ 5.53 & 87.49 $\pm$ 0.36 & 97 \\
        $\gN$-Max-SW & 7.38 $\pm$ 3.29 & 618.74 $\pm$ 153.87 & 648.32 $\pm$ 117.03 & 87.43 $\pm$ 0.15 & 96 \\
        \midrule
        $\gL$v-DSW (ours) & 0.68 $\pm$ 0.00 & 85.32 $\pm$ 0.54 & 188.32 $\pm$ 0.23 & 87.70 $\pm$ 0.34 & 114 \\
        $\gG$v-DSW (ours) & 0.68 $\pm$ 0.01 & 82.77 $\pm$ 0.48 & 187.04 $\pm$ 1.11 & 87.75 $\pm$ 0.19 & 117 \\
        $\gN$v-DSW (ours) & \textbf{0.67 $\pm$ 0.00} & 83.47 $\pm$ 0.49 & 186.66 $\pm$ 0.81 & 87.84 $\pm$ 0.07 & 115 \\
        $\gA$v-DSW (ours) & 0.67 $\pm$ 0.01 & 83.08 $\pm$ 1.22 & 186.27 $\pm$ 0.56 & 88.05 $\pm$ 0.17 & 230 \\
        $\gE\gA$v-DSW (ours) & 0.68 $\pm$ 0.01 & 82.05 $\pm$ 0.40 & 186.46 $\pm$ 0.25 & 88.07 $\pm$ 0.21 & 125 \\
        $\gL\gA$v-DSW (ours) & 0.68 $\pm$ 0.00 & \textbf{81.03 $\pm$ 0.18} & \textbf{185.26 $\pm$ 0.31} & \textbf{88.28 $\pm$ 0.13} & 123 \\
        \bottomrule
        \end{tabular}
    }
    \vskip -0.2in
\end{table}

\textbf{v-DSW:} We use stochastic projected gradient ascent algorithm to optimize the location vector $\epsilon$ in Equation~\ref{def:vMF} while we fix the concentration parameter $\kappa$ to 1 for both v-DSW and all of its amortized versions. Similar to Max-SW, it is trained with an Adam optimizer with an initial learning rate of 1e-4. The number of iterations $T$ is selected from $\{ 1, 10, 50 \}$ based on the task performance. Intuitively, increasing the number of iterations leads to a better approximation that is closer to the optimal value but comes with an expensive computational cost. We also use the Monte Carlo estimation with 100 slices as in SW.

\subsection{Details of amortized sliced Wasserstein distances}
\label{subsec:amortize_details}
\textbf{Linear, generalized linear, and non-linear models:} We adopt the official implementations in~\cite{nguyen2022amortized}.

\textbf{Self-attention-based models:} We adapt the official implementations from their corresponding papers in our experiments. For all variants, $d_v$ is set to 3, which equals the dimension of point-clouds while $d_k$ is chosen from $\{ 16, 32, 64, 128\}$. In Equation~\ref{eq:linear_self_attention}, the projected dimension $k$ is selected from $\{ 64, 128 \}$.

\begin{figure}[!t]
\begin{center}
\begin{tabular}{c}
\widgraph{0.95\textwidth}{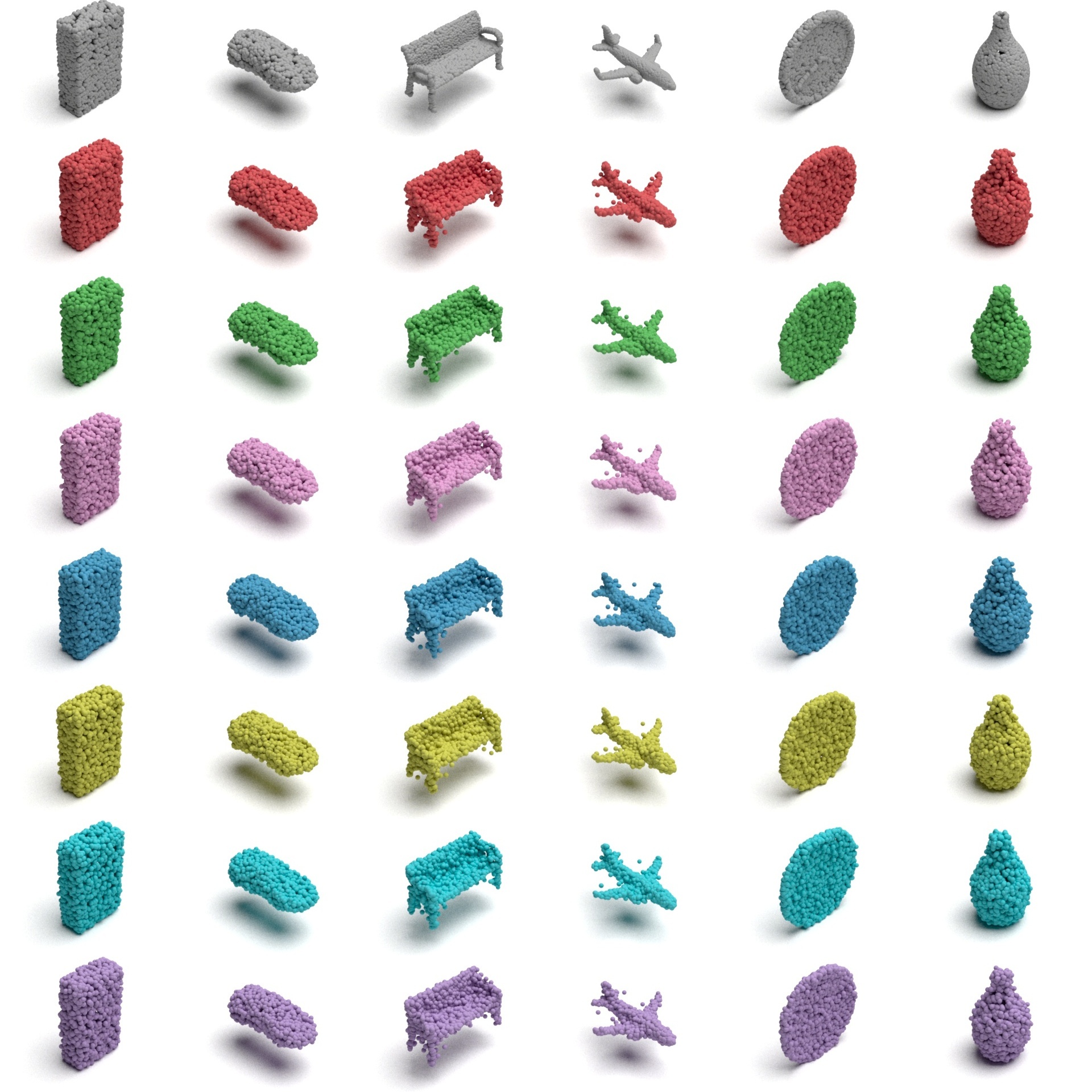}
\end{tabular}
\end{center}
\caption{
\footnotesize{Qualitative results of reconstructing point-clouds in the ShapeNet Core-55 dataset. From top to bottom: input, SW, Max-SW (T = 50), ASW, v-DSW (T = 50), $\gN$v-DSW, $\gE\gA$v-DSW, and $\gL\gA$v-DSW.}
} 
\label{fig:reconstructed_point_clouds_full}
\vskip -0.2in
\end{figure}

\textbf{Training amortized models:} The learning rate is set to 1e-3 and the optimizer is set to Adam~\cite{kingma2014adam} with $(\beta_1, \beta_2) = (0, 0.9)$.

\begin{table}[t!]
    \caption{Quantitative results (measured in EMD) of reconstructing point-clouds in the ShapeNet Core-55 dataset.}
    \vskip 0.1in
    \label{table:full_reconstruction_quantitative}
    \centering
    \scalebox{1.0}{
        \begin{tabular}{cccccccc}
        \toprule
        Method & PC1 & PC2 & PC3 & PC4 & PC5 & PC6 & Avg \\
        \midrule
        SW & 141.07 & 139.50 & 118.83 & 99.11 & 150.28 & 128.46 & 129.54 \\
        Max-SW (T = 50) & 145.15 & 131.76 & 112.13 & 116.73 & 139.91 & 115.79 & 126.91 \\
        ASW & 139.17 & 126.55 & 115.49 & \textbf{91.07} & 153.87 & 114.84 & 123.50 \\
        v-DSW (T = 50) & 133.06	& 146.99 & 105.65 & 105.66 & 137.32 & \textbf{110.50} & 123.20 \\
        $\gN$v-DSW & 132.60 & 127.57 & 100.81 & 94.31 & 131.04 & 116.34 & 117.11 \\
        $\gE\gA$v-DSW & 139.64 & \textbf{124.28} & 100.34 & 98.33 & \textbf{123.59} & 115.05 & 116.87 \\
        $\gL\gA$v-DSW & \textbf{130.21} & 127.00 & \textbf{96.75} & 98.09 & 132.33 & 114.11 & \textbf{116.41} \\
        \bottomrule
        \end{tabular}
    }
    \vskip -0.1in
\end{table}

\begin{table}[t!]
    \caption{Reconstruction results of SW and $\gL\gA$v-DSW when changing $L$. CD and SWD are multiplied by 100.}
    \vskip 0.1in
    \label{table:ablation_L}
    \centering
    \scalebox{1.0}{
        \begin{tabular}{cccccc}
        \toprule
        Method & $L$ & CD $(10^{-2}, \downarrow)$ & SW $(10^{-2}, \downarrow)$ & EMD $(\downarrow)$ & Time \\
        \midrule
        \multirow{4}{*}{SW} & 50 & \textbf{0.67 $\pm$ 0.00} & 90.17 $\pm$ 2.97 & 190.97 $\pm$ 1.87 & 100 \\
        & 100 & 0.68 $\pm$ 0.01 & \textbf{89.61 $\pm$ 3.88} & 191.12 $\pm$ 2.88 & 107 \\
        & 200 & \textbf{0.67 $\pm$ 0.00} & 89.54 $\pm$ 4.57 & 191.21 $\pm$ 3.87 & 111 \\
        & 500 & 0.67 $\pm$ 0.01 & 88.20 $\pm$ 4.22 & 190.14 $\pm$ 2.35 & 142 \\
        \midrule
        \multirow{2}{*}{$\gL\gA$v-DSW} & 50 & 0.68 $\pm$ 0.01 & 85.88 $\pm$ 4.03 & 188.80 $\pm$ 2.55 & 133 \\
        & 100 & 0.68 $\pm$ 0.00 & \textbf{81.03 $\pm$ 0.18} & \textbf{185.26 $\pm$ 0.31} & 123 \\
        \bottomrule
        \end{tabular}
    }
    \vskip -0.1in
\end{table}

\begin{table}[t!]
    \caption{Reconstruction results of v-DSW when changing the number of projected sub-gradient ascent iteration (T). CD and SWD are multiplied by 100.}
    \vskip 0.1in
    \label{table:ablation_T}
    \centering
    \scalebox{1.0}{
        \begin{tabular}{cccc}
        \toprule
        Method & CD $(10^{-2}, \downarrow)$ & SW $(10^{-2}, \downarrow)$ & EMD $(\downarrow)$ \\
        \midrule
        v-DSW (T = 0) & 0.67 $\pm$ 0.01 & 88.63 $\pm$ 2.30 & 189.81 $\pm$ 1.19 \\
        v-DSW (T = 1) & 0.67 $\pm$ 0.01 & 87.29 $\pm$ 1.49 & 188.52 $\pm$ 1.47 \\
        v-DSW (T = 10) & 0.68 $\pm$ 0.00 & 87.44 $\pm$ 1.07 & 189.97 $\pm$ 1.04 \\
        v-DSW (T = 50) & \textbf{0.67 $\pm$ 0.00} & 85.03 $\pm$ 3.31 & 187.75 $\pm$ 2.00 \\
        \midrule
        $\gL\gA$v-DSW & 0.68 $\pm$ 0.00 & \textbf{81.03 $\pm$ 0.18} & \textbf{185.26 $\pm$ 0.31} \\
        \bottomrule
        \end{tabular}
    }
    \vskip -0.1in
\end{table}

\section{Additional experimental results}
\label{sec:add_exps}
\textbf{Point-cloud reconstruction:}
Table~\ref{table:full_reconstruction_result} illustrates the full quantitative results of the point-cloud reconstruction experiment. For Max-SW and v-DSW, we vary the number of gradient iterations T in $\{ 1, 10, 50 \}$. Because CD is not a proper distance so we choose the best number of iterations based on SW and EMD losses (we prioritize EMD loss first then SW). As can be seen from the table, increasing the number of gradient ascent iterations $(T)$ increases the reconstruction performance of Max-SW and v-DSW but comes with the cost of additional computation, especially for v-DSW. However, with all choices of T, the reconstruction performance (measured in SW and EMD) of both Max-SW and v-DSW are generally worse than our amortized methods. In addition, our amortized methods have smaller standard deviations over 3 runs, thus they are more stable than the conventional optimization method using gradient ascent method. The qualitative results are given in Figure~\ref{fig:reconstructed_point_clouds_full}. The corresponding quantitative results in EMD are given in Table~\ref{table:full_reconstruction_quantitative}. It can be seen that our amortized v-DSW methods have more favorable performance.

\begin{table}[t]
    \caption{Reconstruction results of $\gL\gA$v-DSW when changing $\kappa$. CD and SWD are multiplied by 100.}
    \vskip 0.1in
    \label{table:ablation_kappa}
    \centering
    \scalebox{1.0}{
        \begin{tabular}{cccc}
        \toprule
        $\kappa$ & CD $(10^{-2}, \downarrow)$ & SW $(10^{-2}, \downarrow)$ & EMD $(\downarrow)$ \\
        \midrule
        0.1 & \textbf{0.67 $\pm$ 0.00} & 81.88 $\pm$ 1.09 & 185.30 $\pm$ 0.94 \\
        1 & 0.68 $\pm$ 0.00 & \textbf{81.03 $\pm$ 0.18} & \textbf{185.26 $\pm$ 0.31} \\
        10 & 0.85 $\pm$ 0.01 & 96.01 $\pm$ 4.24 & 208.46 $\pm$ 4.03 \\
        \bottomrule
        \end{tabular}
    }
    \vskip -0.1in
\end{table}
\begin{table}[!t]
    \caption{Performance comparison of point-cloud generation on the chair category of ShapeNet. For v-DSW and Max-SW, T denotes the number of projected sub-gradient ascent iterations. In Table~\ref{table:short_generation_result}, v-DSW and Max-SW have T = 50 and 10 iterations, respectively. JSD, MMD-CD, and MMD-EMD are multiplied by 100.}
    \vskip 0.1in
    \label{table:full_generation_result}
    \centering
    \scalebox{0.88}{
        \begin{tabular}{lccccccc}
        \toprule
        Method & JSD $(10^{-2}, \downarrow)$ & \multicolumn{2}{c}{MMD $(10^{-2}, \downarrow)$} & \multicolumn{2}{c}{COV $(\%, \uparrow)$} & \multicolumn{2}{c}{1-NNA $(\%, \downarrow)$} \\
        \cmidrule(lr){3-4} \cmidrule(lr){5-6} \cmidrule(lr){7-8}
        & & CD & EMD & CD & EMD & CD & EMD \\
        \midrule
        CD & 17.88 $\pm$ 1.14 & 1.12 $\pm$ 0.02 & 17.19 $\pm$ 0.36 & 23.73 $\pm$ 1.69 & 10.83 $\pm$ 0.89 & 98.45 $\pm$ 0.10 & 100.00 $\pm$ 0.00 \\
        EMD & 5.15 $\pm$ 1.52 & 0.61 $\pm$ 0.09 & 10.37 $\pm$ 0.61 & 41.65 $\pm$ 2.19 & 42.54 $\pm$ 2.42 & 87.76 $\pm$ 1.46 & 87.30 $\pm$ 1.22 \\
        \midrule
        SW & 1.56 $\pm$ 0.06 & 0.72 $\pm$ 0.02 & 10.80 $\pm$ 0.11 & 38.55 $\pm$ 0.43 & 45.35 $\pm$ 0.48 & 89.91 $\pm$ 1.17 & 88.28 $\pm$ 0.70 \\
        Max-SW (T = 1) & 1.74 $\pm$ 0.22 & 0.78 $\pm$ 0.05 & 11.05 $\pm$ 0.31 & 39.39 $\pm$ 2.28 & 46.82 $\pm$ 0.79 & 92.15 $\pm$ 0.95 & 90.20 $\pm$ 0.87 \\
        Max-SW (T = 10) & 1.63 $\pm$ 0.32 & 0.74 $\pm$ 0.01 & 10.84 $\pm$ 0.08 & 40.47 $\pm$ 1.04 & 47.81 $\pm$ 0.78 & 91.46 $\pm$ 0.72 & 89.93 $\pm$ 0.86 \\
        Max-SW (T = 50) & 1.57 $\pm$ 0.26 & 0.80 $\pm$ 0.05 & 11.25 $\pm$ 0.34 & 37.81 $\pm$ 1.69 & 46.23 $\pm$ 0.64 & 92.15 $\pm$ 0.72 & 90.35 $\pm$ 0.28 \\
        ASW & 1.75 $\pm$ 0.38 & 0.78 $\pm$ 0.05 & 11.27 $\pm$ 0.38 & 38.16 $\pm$ 2.15 & 45.45 $\pm$ 1.40 & 91.21 $\pm$ 0.40 & 89.36 $\pm$ 0.40 \\
        v-DSW (T = 1) & 1.84 $\pm$ 0.17 & 0.75 $\pm$ 0.03 & 11.02 $\pm$ 0.21 & 38.26 $\pm$ 1.46 & 45.35 $\pm$ 1.70 & 90.08 $\pm$ 0.48 & 87.81 $\pm$ 0.16 \\
        v-DSW (T = 10) & 1.48 $\pm$ 0.17 & 0.77 $\pm$ 0.02 & 11.09 $\pm$ 0.09 & 37.22 $\pm$ 0.96 & 43.77 $\pm$ 0.39 & 90.40 $\pm$ 1.05 & 88.87 $\pm$ 1.04 \\
        v-DSW (T = 50) & 1.79 $\pm$ 0.17 & 0.72 $\pm$ 0.02 & 10.73 $\pm$ 0.20 & 37.76 $\pm$ 0.71 & 45.49 $\pm$ 1.37 & 90.23 $\pm$ 0.13 & 88.33 $\pm$ 0.95 \\
        \midrule
        $\gL$v-DSW (ours) & 1.67 $\pm$ 0.07 & 0.77 $\pm$ 0.04 & 11.10 $\pm$ 0.33 & 37.91 $\pm$ 1.84 & 45.64 $\pm$ 2.30 & 90.42 $\pm$ 0.53 & 88.82 $\pm$ 0.38 \\
        $\gG$v-DSW (ours) & 1.56 $\pm$ 0.22 & 0.75 $\pm$ 0.02 & 10.99 $\pm$ 0.11 & 37.81 $\pm$ 1.70 & 45.69 $\pm$ 0.46 & 90.32 $\pm$ 0.38 & 88.26 $\pm$ 0.28 \\
        $\gN$v-DSW (ours) & \textbf{1.44 $\pm$ 0.06} & 0.75 $\pm$ 0.02 & 10.95 $\pm$ 0.09 & 38.40 $\pm$ 1.34 & 46.28 $\pm$ 2.06 & 90.15 $\pm$ 0.80 & 88.65 $\pm$ 0.82 \\
        $\gE\gA$v-DSW (ours) & 1.73 $\pm$ 0.21 & \textbf{0.71 $\pm$ 0.04} & \textbf{10.70 $\pm$ 0.26} & 40.03 $\pm$ 1.28 & \textbf{48.01 $\pm$ 1.07} & 89.98 $\pm$ 0.57 & 88.55 $\pm$ 0.38 \\
        $\gL\gA$v-DSW (ours) & 1.54 $\pm$ 0.09 & 0.72 $\pm$ 0.03 & 10.74 $\pm$ 0.35 & \textbf{40.62 $\pm$ 1.39} & 45.84 $\pm$ 1.23 & \textbf{89.44 $\pm$ 0.28} & \textbf{87.79 $\pm$ 0.37} \\
        \bottomrule
        \end{tabular}
    }
\end{table}
\textbf{On the number of projections ($L$).} In our experiments, $L$ is fixed to 100 as in the ASW's paper. Here, we conduct an ablation study on the number of projections $L$ and report the result in Table~\ref{table:ablation_L}. As can be seen from the table, increasing the number of projections improves the performance in terms of EMD but comes with an extra running time. We see that $\gL\gA$v-DSW with $L = 50$ and $L = 100$ are faster than SW with $L = 500$ while being better in terms of SW and EMD evaluation metrics. Compared to SW with $L = 200$, $\gL\gA$v-DSW with $L = 50$ has approximately the same computational time while having lower SW and EMD evaluation metrics. 

\textbf{On the choice of location vector $\epsilon$.} We would like to recall that the optimal location vector $\epsilon^\star$ of v-DSW are computed using Algorithm~\ref{alg:trainingvDSW} in Appendix~\ref{subsec:training_algorithms}. To show its effectiveness, we compare it with a random location $\epsilon$, i.e. T = 0. Table~\ref{table:ablation_T} illustrates that optimizing for the location parameter of the vMF distribution helps to improve the reconstruction. Moreover, our amortized optimization still gives better reconstruction scores than the randomly initialized location $\epsilon$ and the optimized location using the conventional method. Therefore, using amortized optimization could actually have benefits.

\textbf{On the choice of parameter $\kappa$.} We would like first to recall that $\kappa$ is set to 1 for all v-DSW and amortized v-DSW methods in our experiments. In practice, the parameter $\kappa$ can be chosen by doing a grid search. Here, we conduct an ablation study by varying $\kappa \in \{ 0.1, 1, 10 \}$ for $\gL\gA$v-DSW and report the results in Table~\ref{table:ablation_kappa}. As can be seen from the table, $\kappa = 1$ results in the best-performing EMD.

\textbf{Point-cloud generation.}
We summarize the full quantitative results for point-cloud generation in Table~\ref{table:full_generation_result}. For Max-SW and v-DSW, we again change the number of gradient iterations $T$ in $\{ 1, 10, 50 \}$. Note that $\gA$v-DSW cannot be used in this experiment due to being out of memory while the performance of amortized Max-SW is too bad. Therefore, their results are not reported in this experiment. As can be seen from the table, amortized v-DSW methods achieve the best performance in 7 out of 7 metrics. Using more than one gradient ascent iteration $(T \in \{ 10, 50 \})$ does improve the generation performance of Max-SW and v-DSW but comes with the cost of additional computation.


\end{document}